\DeclareMathOperator*{\argmax}{arg\,max}
\DeclareMathOperator*{\argmin}{arg\,min}
\newtheorem{proposition}{Proposition}
\definecolor{DarkGreen}{rgb}{0.1,0.5,0.1}
\begin{document}

\title{Optimal Local Explainer Aggregation for Interpretable Prediction}

\author{Qiaomei Li\footnotemark[1] \and Rachel Cummings\footnotemark[2] \and Yonatan Mintz\footnotemark[1]}

\renewcommand{\thefootnote}{\fnsymbol{footnote}}
\footnotetext[1]{Department of Industrial and Systems Engineering, University of Wisconsin, Madison. Emails: {\tt \{qli449, ymintz\}@wisc.edu}. Most of this work was completed while Q.L. and Y.M. were at the Georgia Institute of Technology.}
\footnotetext[2]{H. Milton Stewart School of Industrial and Systems Engineering, Georgia Institute of Technology. Email: {\tt rachelc@gatech.edu}. Supported in part by NSF grants CNS-1850187 and CNS-1942772 (CAREER),  a Mozilla Research Grant, a Google Research Fellowship, and a JPMorgan Chase Faculty Research Award. Part of this work was completed while R.C. was a Google Research Fellow at the Simons Institute for the Theory of Computing.}

\renewcommand{\thefootnote}{\arabic{footnote}}

\maketitle


%
%


\begin{abstract}
A key challenge for decision makers when incorporating black box machine learned models into practice is being able to understand the predictions provided by these models. One set of methods proposed to address this challenge is that of training surrogate \emph{explainer models} which approximate how the more complex model is computing its predictions. Explainer methods are generally classified as either \emph{local} or \emph{global} explainers depending on what portion of the data space they are purported to explain. The improved coverage of global explainers usually comes at the expense of explainer \emph{fidelity} (i.e., how well the explainer's predictions match that of the black box model). One way of trading off the advantages of both approaches is to aggregate several local explainers into a single explainer model with improved coverage. However, the problem of aggregating these local explainers is computationally challenging, and existing methods only use heuristics to form these aggregations.
  
In this paper, we propose a local explainer aggregation method which selects local explainers using non-convex optimization. In contrast to other heuristic methods, we use an integer optimization framework to combine local explainers into a near-global aggregate explainer.  Our framework allows a decision-maker to directly tradeoff coverage and fidelity of the resulting aggregation through the parameters of the optimization problem. We also propose a novel local explainer algorithm based on information filtering. We evaluate our algorithmic framework on two healthcare datasets: the Parkinson's Progression Marker Initiative (PPMI) data set and a geriatric mobility dataset from the UCI machine learning repository. Our choice of these healthcare-related datasets is motivated by the anticipated need for explainable precision medicine. We find that our method outperforms existing local explainer aggregation methods in terms of both fidelity and coverage of classification. It also improves on fidelity over existing global explainer methods, particularly in multi-class settings, where state-of-the-art methods achieve 70\% and ours achieves 90\%.
\end{abstract}

\section{Introduction}

When applying machine learning and AI models in high risk and sensitive settings, one of the biggest challenges for decision makers is to rationalize the insights provided by the model. In applications such as precision medicine, both accuracy of prediction (e.g., anticipated efficacy of treatment) and transparency of how predictions are made are key for obtaining informed consent. However, the types of models that typically achieve the highest levels of accuracy also tend to be extremely complex, and even machine learning experts describe them as ``black boxes'' because it is difficult to explain why certain predictions are made \citep{breiman2001statistical}. One popular approach to resolve this trade off between explainability and accuracy is to extract simple \emph{explainer} models from complex black box models.  These models are intended to provide a simplified facsimile of the true model that is more useful for human interpretation of the generated predictions. 


Two important metrics for evaluating explainer models are \emph{fidelity} and \emph{coverage}. Fidelity measures how well the explainer's predictions match the predictions of the original black box model, and coverage measure the fraction of the data universe that is reasonably explained by the explainer model.  Explainer methods are generally classified as either \emph{global} or \emph{local}, based on how they trade off between these two quantities.  Global explainers attempt to explain the full black box model across the entirety of the data.  These models have a hard constraint to provide 100\% coverage, often at the expense of fidelity.  Local explainers, on the other hand, sacrifice coverage to potentially provide higher fidelity explanations in a smaller region of the data, usually centered around one single prediction.



Recent proposals suggest finding a middle ground between these two extremes by forming global (or near-global) explainers by aggregating local explainer models \cite{ribeiro2016should}.  This approach would allow the decision-maker to trade off among coverage, fidelity, and explainability: including more local explainers in the aggregate model would improve coverage and fidelity, at the cost of a more complex---and hence less interpretable---aggregate model. However, the problem of computing the best subset of local explainers to explain a given black box model is combinatorial in nature, and hence computationally challenging to solve. All existing methods for building aggregate explainers use only heuristic approaches, and thus do not provide theoretical performance guarantees. 

In this work, we present a novel way of constructing provably optimal aggregate explainer models from local explainers. We use an integer programming (IP) based optimization framework that trades off between coverage of the aggregate model and fidelity of the local explainers that comprise the aggregate model.  We also propose a novel local explainer methodology that uses feature selection as an information filter, and is designed for effective use in aggregation. We empirically validate the performance of this framework in two data-driven healthcare applications: Parkinson's Disease progression and geriatric mobility. These experimental results show that our model provides higher fidelity than existing methods.

\subsection{Related Work}
\label{sec:lit_rev}
Our paper builds on previous work in the broader field of interpretable machine learning. The two primary types of interpretable learning include models that are interpretable by design \citep{aswani2019behavioral}, and black box models that can be explained using global explainer \citep{wang2015falling,lakkaraju2016interpretable,ustun2016supersparse,bastani2018interpreting} or local explainer \citep{ribeiro2016should,ribeiro2018anchors} methods. 

Models that are interpretable by design are perhaps the gold standard for interpretable ML. However, these models often require significant domain knowledge to formulate and train, and are therefore not suited for exploratory tasks such as the precision healthcare applications we study in Section \ref{s.globalexp}. 

Global explainer methodology attempts to train an explainable model (such as a decision tree with minimal branching) to match the predictions of a black box model across the entirety of its feature space. While these models may provide some understanding on the general behavior of the black box model, if the relationship between features and black-box predictions is too complex, then the global explainer may remove many subtleties that are vital for validation and explanation. 

Local explainer methods attempt to train simpler models centered around the prediction for a single data point. The most commonly used local explainer methods are Local Interpretable Model-Agnostic Explanations (LIME) \citep{ribeiro2016should} and anchors \citep{ribeiro2018anchors}. While local methods cannot validate the full black box model, they are useful for understanding the subtleties and justification for particular predictions. In recent literature several other local explainer methods have been proposed that draw inspiration from this stream \citep{rajapaksha2019lormika,sokol2020limetree, plumb2018model}. 

A third option which has been explored in recent literature, is that of aggregating several local explainer models to form a near-global explainer. Generally speaking, these methods have a budget for the maximum number of local explainers that can be incorporated into the aggregation and attempt to maximize possible coverage and fidelity within this budget. One method proposed to form such aggregate explainers is the submodular pick method \citep{ribeiro2016should}, which computes feature importance scores and greedily selects the features with highest importance. \cite{van2019global} argue the Submodular Pick Algorithm has its limitations on predicting global behaviors from local explainers, and that the choice of aggregation function for local explainers is important for performance. They introduce the Global Aggregations of Local Explanations (GALE) method, which can be used to analyze how well the aggregation explains the model's global behavior. They compared the performance of global LIME aggregation with other global aggregation methods for binary and multi-class classification tasks, and found that different aggregation approaches performed best in binary and multi-class settings.

The methodology we propose in this paper builds on top of these existing explainer aggregation methods. In contrast to existing approaches which are heuristic in nature, we formulate the problem of choosing local explainers for the aggregate as an optimization problem. By doing so, our methods can produce explainer aggregates that provide both higher fidelity and higher coverage than existing approaches. In addition, our formulation includes parameters that allows for a direct tradeoff between coverage, fidelity, and interpretability. We believe this approach is especially appropriate for problems in explainable precision healthcare, where the relationship between diagnostic screening measures and the diagnosis is quite complex, and the model should incorporate the richness of this relationship in its predictions.


In addition we propose a local explainer approach in Section \ref{s.local} that includes a feature selection subroutine to improve explainability.  Prior work on feature selection includes instance-wise feature selection \citep{chen2018learning} and Instance-wise Variable Selection using Neural Networks (INVASE) \citep{yoon2018invase}. These approaches select the important features for each sample point using networks for classification with and without the features. Shapley values have also been used for complex model predictions, such as Shapley Sampling Values \citep{vstrumbelj2014explaining} and Shapley Additive Explanations (SHAP) \citep{lundberg2017unified}, which computes Shapley values and presents the explanation as an additive feature attribution method. In contrast to these methods, our feature selection approach relies on a mutual information filter \citep{brown2012conditional} to identify important features. While mutual information has been used in the past for feature selection, we introduce a computationally efficient way to compute this mutual information for the specific use of training local explainer models.



\subsection{Our Contributions}


In this paper, we formulate the problem of aggregating local explainers into an aggregate explainer algorithm as a non-convex optimization problem. In particular, we show that this aggregation problem can be written as an integer program (IP), that can be solved effectively using commercial solvers. This formulation is also helpful as it allows us to directly tradeoff coverage and fidelity of the resulting aggregation through the parameters of the optimization problem. 

Additionally, we design a new methodology for training local explainers for effective use in aggregation.  Our local explainer algorithm directly computes locally significant features using an information filter.  We introduce a novel computationally efficient algorithm for this filtering step, and our approach results in simpler (i.e., more interpretable) local explainers compared to prior work that used regularization for feature selection. 

To validate our results, we compare our optimization based methodology against four other state of the art methods on two real world data sets. Both data sets come from the applications in the healthcare space. The first uses the Parkinson's Progression Marker Initiative (PPMI) \cite{PPMI}, where we create explainer methods for a model tasked with screening patients for Parkinson's Disease. The second uses a dataset of Geriatric activity, where we explain the predictions of a model that classifies the physical activity of geriatric patients to prevent falling. Our experiments show that our optimization method outperforms many of the existing explainer methods in terms of fidelity and coverage. In particular, when we examine cases of explaining multi-class model predictions, our explainer method can achieves 90\% fidelity at 40-50\% coverage, while existing global methods only achieved 70\% fidelity, albeit at 100\% coverage. Our results show that our approach on the Pareto frontier of the fidelity and coverage tradeoff. Our IP framework outperforms existing aggregation methods in terms of  both coverage and fidelity across all potential aggregation budgets (i.e., numbers of local explainers in the aggregate model). 

\paragraph{Organization.} Section \ref{s.global} details our aggregate explainer framework and its formalization as an IP.  Section \ref{s.local} summarizes our local explainer methodology and feature selection algorithm. Section \ref{s.globalexp} provides empirical validation and compare the performance of our IP-based approach with other local and global explainer methods. Preliminaries are contained within each section, and additional technical details can be found in the appendices.




\section{Explainer Aggregation Methodology}\label{s.global}
%

Explainer models which can generalize to a large portion of the feature space are critical for model diagnostics and transparency. However, an explainer that is constrained to explain the space feature space is likely have low fidelity since, by design, the explainer model is less complex than the black box model it is purported to explain. However, simpler models can achieve higher fidelity by attempting to explain the local behavior of the black box model at the cost of lower coverage. 

One way to address the tradeoff between coverage and fidelity is to create a near-global aggregate explainer model by combining several local explainer models. Existing approaches have used this idea \citep{ribeiro2016should} by formulating the construction of an aggregate explainer as an optimization problem: maximize coverage of the explainer subject to a constraint on the total number of local explainers included in the aggregate. Solving this optimization problem is conjectured to be computationally intractable \citep{ribeiro2016should}, and prior work has only attempted to solve it using heuristics.

In this section, we formulate the problem of constructing the aggregate explainer explicitly as an integer linear program that can be solved efficiently using commercial solvers, and allows us to directly trade off coverage and fidelity.

\subsection{Mathematical Programming Formulation of Aggregation Problem}
\label{sec:math_prog_form}

To formulate the optimization problem of constructing the aggregate explainer, we must first formally define the concepts of coverage and fidelity. 

Let $\mathcal{X} \subset \mathbb{R}^m$ be the feature space that is modeled with a black box function, and let $f:\mathcal{X} \rightarrow \mathbb{Z}_+$ be the black box function of interest. Let $\mathcal{L} \subseteq \mathbb{Z}_+$ be the label space in the image of $f$.  We consider our explanation task over a dataset $\mathcal{D}$ containing $n$ ordered pairs $(x_i,y_i)$ for $i \in [n]$, where $x_i \in \mathcal{X}$ are the features values and $y_i \in \mathcal{L}$ is the class label which has been generated by $f$. That is, $y_i = f(x_i)$.


Let $g_{i,r}: \mathcal{X} \rightarrow \mathcal{L}$ denote a local explainer model that explains the local behavior of the black box function $f$ on inputs within a ball of radius $r \in \mathbb{R}_+$ centered around the point $x_i \in \mathcal{X}$. We use $\mathcal{X}_{i,r}:=\{x \in \mathcal{X}: \|x - x_i\| \leq r \}$ to denote the region explained by $g_{i,r}$.

Define an aggregate explainer $\gamma$ to be a set of local explainers centered around a subset of points in $\mathcal{D}$, where the local explainer for point $x_i \in \mathcal{D}$ has radius $r_i$.\footnote{More generally, any local explainers can be aggregated into $\gamma$.  However, we assume the the explainer algorithm only has access to points in $\mathcal{D}$, so we restrict ourselves to only considering these points. It is assumed that the radii $r_i$ are parameters of the problem and hence known to decision-maker.} We will refer to a generic local explainer $g\in \gamma$ and corresponding region of explanation $\mathcal{X}_g$. 
 
Using these quantities we define the \emph{coverage of aggregate explainer $\gamma$} on the data set $\mathcal{D}$ as the total number of points in the data set that are covered by the explanation radius of at least one explainer contained in $\gamma$. We denote the coverage as:  
\begin{equation}\label{eq.cov}
\text{Cov}(\gamma,\mathcal{D}) = \sum_{x \in \mathcal{D}} \max_{i\in \{i: g_{i,r} \in \gamma \}} \mathbbm{1}[x \in \mathcal{X}_{i,r}].
\end{equation}

Next we note that the fidelity of a single local explainer can be defined as the accuracy of that explainer using the predicted labels of the black box model as the ground truth. We will define the \emph{fidelity of aggregate explainer $\gamma$} on the data set $\mathcal{D}$ as the minimum of the fidelity obtained by each individual local explainer in $\gamma$. We first need to define $\mathcal{D}_g$ as the number of points in the dataset contained in the explanation region of $g$, i.e., $\mathcal{D}_g = \{x \in \mathcal{D}: x\in \mathcal{X}_g\}$. We denote the fidelity of $\gamma$ as: 
\begin{equation}\label{eq.fid}
\text{Fid}(\gamma,\mathcal{D}) = \min_{g \in \gamma} \frac{1}{|\mathcal{D}_{g}| }\sum_{x \in \mathcal{D}} \mathbbm{1}[g(x) = f(x)]\mathbbm{1}[x \in \mathcal{X}_{g}].   
\end{equation}

While one could instead define the fidelity of $\gamma$ as the average of the fidelities of its component explainers, our choice to use the minimum fidelity gives a stricter measure of how well the aggregate explainer captures the behavior of the black box model. This stricter measure is more appropriate for the healthcare applications we consider in Section \ref{s.globalexp}, where a minimum standard of care is required. Note also that while we may be interested in the coverage and fidelity of $\gamma$ across the entirety of $\mathcal{X}$, computing these quantities may be intractable or impossible in practice when $\mathcal{X}$ is not known a priori. Thus we consider these quantities only across an $r$-ball covering of our dataset.

Let $K$ denote the budget of the maximum number of local explainers that can be contained in $\gamma$, and let $\varphi$ be the minimum fidelity required for the aggregate explainer. Then the problem of computing an aggregate explainer can be formulated as the following optimization problem:
\begin{equation}
\label{eq:global_problem}
\begin{aligned}
\max_{\gamma}&\ \left\{\text{Cov}(\gamma,\mathcal{D}) : \text{Fid}(\gamma,\mathcal{D}) \geq \varphi, |\gamma| \leq K\right\}.
\end{aligned}
\end{equation}

\subsection{Reformulation as Integer Program (IP)}
\label{sec:reform}
As written, optimization problem \eqref{eq:global_problem} is not trivial to solve, and in particular could require enumerating all possible subsets $\gamma$ of local explainers.  To address this challenge, we propose reformulating problem \eqref{eq:global_problem} as an Integer Program (IP) that can be solved using current commercial software. To do this, we first define three sets of binary variables that we will call $w_i,y_j,z_{ij}$. Let $w_i$ be a binary variable that is equal to 1 if explainer $g_{i,r_i} \in \gamma$.  That is, $w_i = \mathbbm{1}[g_{i,r_i} \in \gamma]$. Let $y_j$ be a binary variable that is equal to 1 if point $j$ is covered by the aggregate explainer $\gamma$. That is $y_j = \mathbbm{1}[x_j \in \cup_{g \in \gamma} \mathcal{X}_{g}]$.  Finally, let $z_{ij}$ be a binary variable that is equal to 1 if explainer $g_{i,r_i} \in \gamma$ covers point $x_j$. That is, $z_{ij} = \mathbbm{1}[x_j \in \mathcal{X}_{i,r_i}]$. We can now define the coverage and fidelity of aggregate explainer $\gamma$ as integer programs written in terms of these three sets of variables.

\begin{proposition}
	\label{prop:cov}
	$\text{Cov}(\gamma,\mathcal{D})$, the coverage of aggregate explainer $\gamma$ on dataset $\mathcal{D}$, can be expressed with the following set of integer variables and constraints:
	\begin{equation}
	\begin{aligned}
	\text{Cov}(\gamma,\mathcal{D}) &=\sum_{j=1}^n y_{j}, \\
	\text{s.t.} \quad     &z_{ij} \leq w_i, \quad i,j \in [n],\\
	& y_j \geq z_{ij}, \quad i,j \in [n],\\
	& y_j \leq \sum_{i= 1}^n z_{ij}, \quad  j\in [n],\\
	&\|x_i - x_j\|z_{ij} \leq r_i,   \quad i,j \in [n].
	\end{aligned}
	\end{equation}
\end{proposition}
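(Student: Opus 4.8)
The plan is to fix an aggregate explainer $\gamma$, equivalently the binary vector $w$ with $w_i = \mathbbm{1}[g_{i,r_i} \in \gamma]$, and show that the maximum of $\sum_{j=1}^n y_j$ over all $(y,z)$ satisfying the stated constraints equals $\text{Cov}(\gamma,\mathcal{D})$ as defined in \eqref{eq.cov}. Since problem \eqref{eq:global_problem} maximizes coverage, establishing this equivalence for every fixed $w$ shows that the IP formulation correctly encodes coverage for whichever $\gamma$ is ultimately selected. I would prove the equality by the two inequalities separately: every feasible solution satisfies $\sum_j y_j \leq \text{Cov}(\gamma,\mathcal{D})$, and some feasible solution attains this value.

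First I would interpret the constraints. The distance constraint $\|x_i - x_j\| z_{ij} \leq r_i$ forces $z_{ij} = 0$ whenever $\|x_i - x_j\| > r_i$, i.e.\ whenever $x_j \notin \mathcal{X}_{i,r_i}$, and imposes nothing otherwise; the constraint $z_{ij} \leq w_i$ forces $z_{ij} = 0$ whenever explainer $i$ is not selected. Together these say that $z_{ij} = 1$ is admissible exactly for the pairs with $w_i = 1$ and $x_j \in \mathcal{X}_{i,r_i}$, i.e.\ exactly when $x_j$ is covered by a selected explainer. Next, for each $j$ the pair $y_j \geq z_{ij}$ and $y_j \leq \sum_i z_{ij}$, together with $y_j \in \{0,1\}$, is the standard linearization of the logical OR $y_j = \max_i z_{ij}$: it makes $y_j = 1$ feasible if and only if $\sum_i z_{ij} \geq 1$, that is, if and only if some admissible $z_{ij} = 1$.

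For the upper bound, any feasible $y$ has $y_j = 1$ only when $x_j$ is covered by some selected explainer, so $\sum_j y_j$ is at most the number of covered points, which is exactly $\text{Cov}(\gamma,\mathcal{D})$ since the max-indicator in \eqref{eq.cov} equals one precisely when $x_j \in \mathcal{X}_{i,r_i}$ for some $i$ with $g_{i,r} \in \gamma$. For achievability I would exhibit the canonical solution $z_{ij} = \mathbbm{1}[w_i = 1 \text{ and } x_j \in \mathcal{X}_{i,r_i}]$ and $y_j = \max_i z_{ij}$, and check directly that it is feasible with objective value equal to the number of covered points. Combining the two directions yields $\max \sum_j y_j = \text{Cov}(\gamma,\mathcal{D})$.

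The main subtlety, and the point I would emphasize, is that the constraints alone do not \emph{force} $z_{ij}$ or $y_j$ to equal one when coverage holds; they only permit it. The equivalence therefore hinges on the maximization direction of the objective in \eqref{eq:global_problem}, which pushes each $y_j$ up to its largest feasible value. This is why the achievability construction is needed alongside the upper bound, and why the result is naturally stated relative to the coverage-maximizing objective rather than as a constraint-only identity.
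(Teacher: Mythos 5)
Your proof is correct, and it takes a genuinely different (and more rigorous) route than the paper's. The paper's proof is a direct-substitution argument: it treats $w_i$, $y_j$, $z_{ij}$ as definitionally equal to the indicator quantities, substitutes them into \eqref{eq.cov}, observes that a maximum of binary variables is a Boolean OR, and then appeals to standard IP-modeling conventions to say the listed constraints encode the conditional logic and the OR operator. In effect, the paper verifies only what you call achievability: the canonical assignment $z_{ij} = \mathbbm{1}[w_i = 1 \text{ and } x_j \in \mathcal{X}_{i,r_i}]$, $y_j = \max_i z_{ij}$ is feasible and reproduces the coverage value. You instead fix $w$ and prove the two-sided statement $\max \sum_j y_j = \text{Cov}(\gamma,\mathcal{D})$, combining an upper bound valid for \emph{every} feasible $(y,z)$ with attainment by the canonical solution. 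This buys something real: as you correctly emphasize, the constraints only permit, and never force, $z_{ij}$ and $y_j$ to equal one when coverage holds (the all-zeros assignment is always feasible), so the claimed identity is meaningful only under the maximization in \eqref{eq:global_problem}. Your sandwich argument is exactly what is needed for the downstream reformulation in Proposition~\ref{prop:reform} to be valid, and it makes explicit a subtlety that the paper's proof glosses over; the paper's version, in exchange, is shorter and conveys the modeling intuition by reference to standard constructions.
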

\begin{proof}
	Recall the definition of $\text{Cov}(\gamma,\mathcal{D})$ as given in Equation \eqref{eq.cov}. We will directly reconstruct this definition using the binary variables defined above. First note that through a simple direct substitution we obtain $\text{Cov}(\gamma,\mathcal{D}) = \sum_{x \in \mathcal{D}} \max_{i\in \{i: g_{i,r} \in \gamma \}} z_{ij}$.  Since taking the maximum of binary variables is equivalent to the Boolean OR operator, we see that $y_j = \max_{i\in \{i: g_{i,r} \in \gamma \}} z_{ij}$, which provides us with the first equality. The next two inequalities directly capture that a local explainer $g_{i,r_i}$ can only explain point $x_j$ if $g_{i,r_i}$ is included in $\gamma$, which is a standard way of modeling conditional logic in IP \citep{wolsey1999integer}.  The next two constraints come from modeling the Boolean OR operator using integer constraints \citep{wolsey1999integer}. The final constraint ensures that a point $x_j$ can only be covered by an explainer $g_{i,r_i}$ if $x_j \in \mathcal{X}_{i,r_i}$, thus preserving the local region of the local explainer.
\end{proof}

Next we consider the minimum fidelity constraint.

\begin{proposition}
	\label{prop:fide}
	The constraint $\text{Fid}(\gamma,\mathcal{D}) \geq \varphi$ can be modeled using the following set of integer linear constraints:
	\begin{equation}
	\begin{aligned}
	&\|x_i - x_j\|z_{ij} \leq r_i,   \quad i,j \in [n],\\
	&z_{ij} \leq w_i, \quad i,j \in [n],\\
	&\sum_{j =1}^n\big( \mathbbm{1}_{\{f(x_j)=g_{i,r_i}(x_j)\}} - \varphi \big)z_{ij} \geq  0, &i\in [n].
	\end{aligned}
	\end{equation}
\end{proposition}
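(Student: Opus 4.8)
The plan is to translate the definition of fidelity from Equation~\eqref{eq.fid} directly into the variables, reuse the membership interpretation of $z_{ij}$ and $w_i$ already established in the proof of Proposition~\ref{prop:cov}, and then clear a denominator to expose a linear constraint. First I would note that the first two constraint families here, $\|x_i - x_j\| z_{ij} \leq r_i$ and $z_{ij} \leq w_i$, are exactly the ones appearing in Proposition~\ref{prop:cov}; they ensure $z_{ij}$ behaves as the indicator $\mathbbm{1}[x_j \in \mathcal{X}_{i,r_i}]$ for each selected explainer and vanishes otherwise. Consequently, for any $i$ with $w_i = 1$ the sum $\sum_{j=1}^n z_{ij}$ equals $|\mathcal{D}_{g_{i,r_i}}|$, and $\sum_{j=1}^n \mathbbm{1}_{\{f(x_j)=g_{i,r_i}(x_j)\}} z_{ij}$ counts the points of the explanation region on which $g_{i,r_i}$ agrees with $f$, so the per-explainer fidelity in \eqref{eq.fid} is the ratio of these two sums.

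Next I would eliminate the minimum using the elementary fact that $\min_{g \in \gamma}(\cdot) \geq \varphi$ holds if and only if every member of $\gamma$ satisfies the bound. This replaces $\text{Fid}(\gamma,\mathcal{D}) \geq \varphi$ with one inequality per selected explainer: for each $i$ with $w_i = 1$,
\[
\frac{\sum_{j=1}^n \mathbbm{1}_{\{f(x_j)=g_{i,r_i}(x_j)\}} z_{ij}}{\sum_{j=1}^n z_{ij}} \geq \varphi.
\]
I would then clear the denominator. Since $z_{ij} \in \{0,1\}$ implies $\sum_{j} z_{ij} \geq 0$, multiplying through preserves the inequality direction and yields $\sum_{j=1}^n \big(\mathbbm{1}_{\{f(x_j)=g_{i,r_i}(x_j)\}} - \varphi\big) z_{ij} \geq 0$, which is precisely the third constraint. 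Because the labels $f(x_j)$ and the predictions $g_{i,r_i}(x_j)$ are fixed data, each coefficient $\mathbbm{1}_{\{f(x_j)=g_{i,r_i}(x_j)\}} - \varphi$ is a constant, so the constraint is genuinely linear in the $z_{ij}$.

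The step I expect to require the most care is reconciling the two quantifiers: the definition ranges only over $g \in \gamma$, whereas the stated constraint is imposed for \emph{all} $i \in [n]$. I would resolve this by observing that if $w_i = 0$, then $z_{ij} \leq w_i = 0$ forces $z_{ij} = 0$ for every $j$, and the constraint collapses to $0 \geq 0$; hence imposing it for all $i$ is equivalent to imposing it only for the selected explainers. The same observation also disposes of the degenerate division-by-zero case, where a selected explainer happens to cover no points: the undefined ratio is then simply treated as unconstrained, matching the linear constraint reducing to $0 \geq 0$. I would close by remarking that in any coverage-maximizing solution each used explainer $g_{i,r_i}$ covers at least its own center $x_i$, so $\sum_j z_{ij} \geq 1$ and the original ratio is well defined, confirming that the linear constraints faithfully encode $\text{Fid}(\gamma,\mathcal{D}) \geq \varphi$.
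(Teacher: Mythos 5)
Your proof is correct and follows essentially the same route as the paper's: reuse the locality and selection constraints from Proposition \ref{prop:cov}, disaggregate the minimum in Equation \eqref{eq.fid} into per-explainer ratio constraints, and clear the nonnegative denominator to obtain the linear form. You are in fact slightly more careful than the paper in reconciling the quantifiers---observing that $z_{ij} \le w_i = 0$ makes the constraint collapse to $0 \ge 0$ for unselected explainers, so imposing it for all $i \in [n]$ is harmless---a point the paper leaves implicit when it disaggregates ``across all $i \in [n]$.''
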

\begin{proof}
	The first two constraints ensure proper local behavior of the local explainer as in Proposition \ref{prop:cov}. Thus we will focus the derivation of the final constraint. Using the definition of $\text{Fid}(\gamma,\mathcal{D})$ in Equation \eqref{eq.fid} and directly substituting variables for indicators, we can express the lower bound constraint as, 
	\[\min_{\{i: g_{i,r_i} \in \gamma\}} \frac{1}{|\mathcal{D}_{g_{i,r_i}}| }\sum_{x_j \in \mathcal{D}} \mathbbm{1}[g_{i,r_i}(x_j) = f(x_j)]z_{ij} \geq \varphi.\]
	Note that if the minimum over all explainers $g_{i,r_i}$ must have fidelity of at least $\varphi$, then every local explainer must also have fidelity at least $\varphi$. This allows us to disaggregate this constraint across all $i \in [n]$. Let us consider the constraint for a single local explainer $g_{i,r_i} \in \gamma$. Using the definition of $z_{ij}$, we note that $|\mathcal{X}_{i,r_i}| = \sum_{x_j \in \mathcal{X}} z_{ij} $. Thus the new lower bound fidelity constraint for a single explainer can be written as:
	\begin{equation}
	\frac{\sum_{j =1}^n \mathbbm{1}[g_{i,r_i}(x_j) = f(x_j)]z_{ij}}{\sum_{j =1}^n z_{ij} } \geq \varphi.
	\end{equation}
	Note that the denominator of the left hand side can only be zero when the numerator is also zero because $\sum_{j =1}^n z_{ij} \geq \sum_{j \in \mathcal{X}} \mathbbm{1}[g_{i,r_i}(x_j) = f(x_j)]z_{ij}$. This means that we can multiply both sides of the inequality by the sum $\sum_{j =1}^n z_{ij}$ while still maintining its validity. Distributing $\varphi$ and combining like terms gives us with the form of the constraint presented in the proposition statement.
\end{proof}

We can then use these expressions to for coverage and fidelity to re-write our optimization problem as an integer program that can then be solved using commercial solvers.

\begin{proposition}
	\label{prop:reform}
	The optimization problem in \eqref{eq:global_problem},
	\begin{align*}
	\max_{\gamma}&\ \{\text{Cov}(\gamma,\mathcal{D}) : \text{Fid}(\gamma,\mathcal{D}) \geq \varphi, |\gamma| \leq K\},
	\end{align*}
	 can be written as the following integer program:
	\begin{equation}
	\begin{aligned}
	\max &\sum_{j = 1}^n y_{j}, \\
	\text{s.t.}  \quad   &z_{ij} \leq w_i, \quad i,j \in [n],\\
	& y_j \ge z_{ij},  \quad i,j \in [n], \\
	& y_j \leq \sum_{i\in \mathcal{X}} z_{ij}, \quad  j\in [n],\\
	&\|x_i - x_j\|z_{ij} \leq r_i,   \quad i,j \in [n], \\
	&\sum_{j =1}^n\big( \mathbbm{1}_{\{f(x_j)=g_{i,r_i}(x_j)\}} - \varphi \big)z_{ij} \geq  0, \quad i\in [n], \\
	& \sum_{i \in \mathcal{X}} w_i \leq K, \\
	& y_j,w_i,z_{ij} \in \{0,1\} \quad i,j \in [n].
	\end{aligned}
	\end{equation}
\end{proposition}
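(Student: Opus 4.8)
The plan is to assemble the final integer program directly from the two preceding propositions, treating it essentially as a bookkeeping exercise rather than a fresh derivation. First I would observe that Proposition \ref{prop:reform} combines three already-established pieces: the coverage expression and its supporting constraints from Proposition \ref{prop:cov}, the fidelity constraint from Proposition \ref{prop:fide}, and the cardinality budget $|\gamma| \leq K$. Since the coverage is being \emph{maximized}, I would begin by invoking Proposition \ref{prop:cov} to replace the objective $\text{Cov}(\gamma,\mathcal{D})$ with $\sum_{j=1}^n y_j$, carrying along its four defining constraints (the $z_{ij} \leq w_i$ linking constraint, the two inequalities $y_j \geq z_{ij}$ and $y_j \leq \sum_i z_{ij}$ that encode the Boolean OR, and the radius constraint $\|x_i - x_j\| z_{ij} \leq r_i$).

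Next I would incorporate the fidelity requirement. By Proposition \ref{prop:fide}, the constraint $\text{Fid}(\gamma,\mathcal{D}) \geq \varphi$ is equivalent to the disaggregated per-explainer constraints, of which the genuinely new one is $\sum_{j=1}^n (\mathbbm{1}_{\{f(x_j)=g_{i,r_i}(x_j)\}} - \varphi) z_{ij} \geq 0$ for each $i \in [n]$; the radius and linking constraints it also lists are already present from the coverage block, so no duplication is introduced into the feasible region. I would then translate the budget constraint $|\gamma| \leq K$: since $w_i = \mathbbm{1}[g_{i,r_i} \in \gamma]$, the cardinality of $\gamma$ is exactly $\sum_i w_i$, giving the linear constraint $\sum_{i \in \mathcal{X}} w_i \leq K$. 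Finally I would append the integrality restrictions $y_j, w_i, z_{ij} \in \{0,1\}$, which were implicit in the definitions of these indicator variables all along.

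The remaining work is to argue that this collection of constraints defines exactly the same optimization as \eqref{eq:global_problem}, i.e.\ that there is a bijection between feasible aggregate explainers $\gamma$ and feasible integer points with matching objective values. In one direction, given any $\gamma$, setting $w_i, y_j, z_{ij}$ to their indicator values produces a feasible point of the IP (by Propositions \ref{prop:cov} and \ref{prop:fide}) with $\sum_j y_j = \text{Cov}(\gamma,\mathcal{D})$; conversely, any feasible integer point determines $\gamma := \{g_{i,r_i} : w_i = 1\}$, and one checks the constraints force $z_{ij}$ and $y_j$ to coincide with the indicators they were designed to model, so coverage and fidelity are faithfully represented. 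The main obstacle, and the only place requiring genuine care, is this last verification: in the maximization, the constraint $y_j \leq \sum_i z_{ij}$ is what prevents $y_j$ from being spuriously set to $1$ when no selected explainer covers $x_j$, while $y_j \geq z_{ij}$ forces $y_j = 1$ whenever some explainer does cover it, so that at optimum $y_j$ is pinned to $\max_i z_{ij}$ exactly as required; I would make sure to note that the objective's maximization pressure together with these two inequalities is precisely what recovers the $\max$ in \eqref{eq.cov}.
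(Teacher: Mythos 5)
Your assembly of the IP is essentially identical to the paper's own proof: the paper likewise imports the objective and the first five constraints from Propositions \ref{prop:cov} and \ref{prop:fide}, rewrites the budget via $|\gamma| = \sum_{i=1}^n w_i$, and appends integrality. Where you go beyond the paper is the final ``verification'' paragraph claiming a bijection between feasible $\gamma$'s and feasible integer points, and that is where your argument breaks. Your treatment of $y_j$ is fine (indeed $y_j \geq z_{ij}$ for all $i$ together with $y_j \leq \sum_i z_{ij}$ pins $y_j = \max_i z_{ij}$ outright, no maximization pressure needed), but your claim that ``one checks the constraints force $z_{ij}$ \ldots to coincide with the indicators they were designed to model'' is false: every constraint involving $z_{ij}$ (namely $z_{ij} \leq w_i$, the radius constraint, and the fidelity constraint) is satisfied by \emph{lowering} $z_{ij}$ to $0$; nothing forces $z_{ij} = 1$ when $w_i = 1$ and $x_j \in \mathcal{X}_{i,r_i}$.

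This matters because of the interaction with the fidelity constraint. A feasible IP point may set $w_i = 1$ and then set $z_{ij} = 0$ precisely on those points of $\mathcal{X}_{i,r_i}$ where $g_{i,r_i}$ mispredicts; the constraint $\sum_j (\mathbbm{1}_{\{f(x_j)=g_{i,r_i}(x_j)\}} - \varphi) z_{ij} \geq 0$ then holds over this selectively claimed subset even though the true fidelity of $g_{i,r_i}$ over $\mathcal{D}_{g_{i,r_i}}$, as defined in \eqref{eq.fid}, is below $\varphi$. Concretely: one explainer whose region contains three data points, correct on two of them, $\varphi = 0.9$, $K = 1$, and all other explainers covering only their own center. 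Problem \eqref{eq:global_problem} forbids selecting the large explainer (fidelity $2/3 < 0.9$), so its optimum is $1$; the IP selects it with $z_{i1}=z_{i2}=1$, $z_{i3}=0$, satisfying $(1-0.9)+(1-0.9) \geq 0$, and attains coverage $2$. So the converse direction of your bijection fails, and the IP value can strictly exceed that of \eqref{eq:global_problem}. To be fair, the paper's proof never attempts this equivalence check at all, so your instinct to supply it is the right one --- but as sketched the verification asserts a step that does not hold, and repairing it requires either an argument that optimal solutions can be taken with $z_{ij}$ equal to the full indicators (false in general, per the example), or a reinterpretation of the IP as enforcing fidelity over the claimed coverage set rather than over $\mathcal{D}_g$, which is a genuinely different constraint from the one in \eqref{eq:global_problem}.
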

\begin{proof}
	The objective function and first five constraints come directly from Propositions \ref{prop:cov} and \ref{prop:fide}. The next constraint comes using the definition of $w_i$ and direct substitution to obtain that $|\gamma| =  \sum_{i \in =1}^n w_i$, which is then used to rewrite the budget constraint from \eqref{eq:global_problem}. The final constraint ensures that our new variables are binary integers.
\end{proof}

\section{Aggregate-Designed Efficient Local Explainer}\label{s.local}

While our main contribution in this paper is the local explainer aggregation methodology, we have additionally designed a new methodology for training local explainers for effective use in aggregation. The key to our methodology is ensuring that local explainers only focus on the most relevant features in the particular region they are designed to explain. In contrast to previous methods that proposed the use of regularization to achieve this goal, we propose directly computing locally significant features using an information filter. Computing such filters are generally computationally expensive and requires the use of numerical integration; however, we introduce an efficient algorithm for filtering out less significant features. This methodology allows us to train local explainers that are significantly less complex than those that use regularization, with better fidelity for their specified region.  In this section we present an overview of our methodology and highlight key results. Further details on the technical specifics of this methodology are deferred to the appendix.

\subsection{Local Explainer Preliminaries}

Building on top of our existing notation from Section \ref{s.global}, let $\Phi= \{1,...,m\}$ be the index set of the features used in the black box prediction. This set can be partitioned into two sets $\Phi_c,\Phi_b \subseteq \Phi$ that respectively represent the set of continuous and binary features. Define the set-valued function $\Phi^*: \mathcal{X} \rightarrow \Phi$ as the function which extracts the minimum set of necessary features to accurately predict the class of a point $x \in \mathcal{D}$.

Formally, let $x[\varphi]$ is an indexing operation that maintains the values of $x$ but only for the features in $\varphi$, and $p$ is the conditional probability mass function of the labels $y$ given the observation of some features.  Then we can write: 
\[ \Phi^*(x) = \argmin_{\varphi \subseteq \Phi}\left\{ |\varphi| : p(y|x) = p(y|x[\varphi])\right\}. \]
If a feature index is not included $\Phi^*(x)$, then it is not required to explain the label of $x$. Note that this problem may be computationally intractable in general since the conditional distributions under $p$ are not known a priori. 

We propose approximating the solution to this problem using \emph{mutual information}, which is an information theoretic quantity that measures the level of correlation between two random variables. If $X,Y$ are random variables with joint density $p$ and marginal densities $p_x,p_y$, then the mutual information between $X$ and $Y$ is denoted $I(X;Y)$ and calculated as:
\begin{equation}
I(X;Y) = \mathbb{E}\left[\log\frac{p(X,Y)}{p_x(X),p_y(Y)}\right]. 
\end{equation}
If $X$ and $Y$ are independent, then $I(X;Y) = 0$; otherwise $I(X;Y) > 0$, meaning that $X$ contains some information about $Y$. A similar quantity can be computed using a conditional distribution given another random variable $Z$, known as the \emph{conditional mutual information} and denoted $I(X;Y|Z)$. 

Our local explainer will use mutual information to estimate which features are relevant for predicting labels. An advantage of using $I(X;Y)$ is that it can be computed using existing methods. However, since it is defined using expectation, this often requires the use of numerical integration. One of the contributions of our algorithm is providing an efficient way to compute this integral using tree traversal.

Finally, we will use $\mathcal{B}(x,r,d)$ to denote the ball around a point $x$ of radius $r$ with respect to a metric $d$. 

\subsection{Local Explainer Overview and Training Procedure}

Our local explainer training methodology is formally presented in Algorithm \ref{alg:loc_exp}. We give a brief overview of its operations here, and defer full details to Appendix \ref{sec:loc_exp}. The algorithm takes in hyper-parameters including number of points $N$ to be sampled for training the explainer, a distance metric $d$, and a radius $r$ around the point $\bar{x}$ being explained. First the algorithm samples $N$ points uniformly from within a $r$ radius of $\bar{x}$; we call this set of points $T(\bar{x})$. Depending on the distance metric being used this can often be done quite efficiently, especially if the features are binary valued or an $\ell^p$ metric is used \citep{barthe2005probabilistic}.  Then using the sampled points, the algorithm uses the Fast Forward Feature Selection (FFFS) algorithm as a subroutine (discussed in Section \ref{s.shortfffs} and formally presented in Appendices \ref{app.fffs}), which uses a mutual-information-based information filter to remove unnecessary features and reduce the complexity of the explainer model. The FFFS algorithm uses an estimate of the joint empirical distribution of $(T(\bar{x}),f(T(\bar{x}))$ to select the most important features for explaining the model's predictions in the given neighborhood using tree traversal. We denote this set of features $\hat{\Phi}$. Then, using these features and the selected points, the local explainer model $g$ is trained by minimizing an appropriate loss function that attempts to match its predictions to those of the black box model.  In principle, a regularization term can be added to the training loss of explainer $g$. However, in our empirical experiments (presented in Appendix \ref{sec:exp_res}), we found that FFFS typically selected at most five features, so even the unregularized models where not overly complex.

\begin{algorithm}
	\begin{algorithmic}[1]
		\caption{Local Explainer Training Algorithm}
		\label{alg:loc_exp}
		\Require sampling radius $r$, number of sample points $N$, black box model $f$, data point to be explained $\bar{x}$, and loss function $L$ for the explainer model $(\bar{x},\bar{y})$
		\State Initialize $T(\bar{x}) = \emptyset$
		\For {$j = \{1,...,N\}$}
		\State Sample $x \sim U(\mathcal{B}(\bar{x},r,d))$
		\State  $T(\bar{x}) \leftarrow T(\bar{x}) \cup x$
		\EndFor
		\State Obtain $\hat{\Phi}(\bar{x}) = \text{FFFS}(T(\bar{x}), \Phi, f)$
		\State Train $g = \argmin_{\hat{g} \in \mathcal{G}}\{\sum_{x\in T(\bar{x})}L(f(x) - \hat{g}(x[\hat{\Phi}])) \}$
		\State \Return g
	\end{algorithmic}
\end{algorithm}

\subsection{Detailed Discussion on Fast Feature Selection}\label{s.shortfffs}

A key step in our algorithm is the use of a mutual information filter to reduce the number of features that will be included in the training of the local explainer. Mutual information filters are commonly used in various signal processing and machine learning applications to assist in feature selection \citep{brown2012conditional}. However, these filters can be quite challenging to compute depending on the structure of the joint density function of the features and labels, and can require the use of (computationally expensive) numerical integration. We handle this challenge by considering an approximation of the density function, using histograms to calculate continuous features. When multiple combinations of features need to be considered as in our setting, the problem of finding the maximum-information minimum-sized feature set is known to be computationally infeasible \citep{brown2012conditional}. As such, our proposed method for computing the filter includes a common heuristic known as \emph{forward selection}, which essentially chooses the next best feature to be included in the selected feature pool in a greedy manner.  Using this method alone would still require recomputing the conditional distribution of the data based on previously selected features, which can result in long run times for large $N$. However, using some prepossessing techniques, we show that these quantities can be stored efficiently using a tree structure, which allows quick computation of the filter.

The general idea of the FFFS algorithm is to consider the feature selection process as a tree construction. Part of this construction relies on an estimate of the empirical density of the features as a histogram with at most $B$ bins and  a preprocessed summary tensor $M \in \{0,1\}^{B\times |\Phi| \times N}$ that indicates which bin of the histogram a feature value for a particular data point lays in. For each entry, $M[b,\varphi,x] = 1$ if the value of feature $\varphi$ at point $x$ falls in the bin $b$. Otherwise, $M[b,\varphi,x] = 0$.  The depth of the tree represents the number of selected features and each node of the tree is a subset of $T(\bar{x})$. 

For example, at the beginning of the selection process, we have a tree with exactly one node $R$ where $R=T(\bar{x})$.  If a binary feature $\varphi_1$ is selected in the first round, then two nodes $a,b$ are added under $R$, where $a = \{x_j: M[0,\varphi_1,j]=1\}$ and $b = \{x_j: M(1,\varphi_1,j)=1\}$. In the second round, the algorithm would use the partition sets $a,b$ to compute the mutual information instead of the complete set $R$. The set $a$ would be used for computing $\hat{p}(\varphi|\varphi_1=0),\; \hat{p}(y|\varphi_1=0) \text{, and }\hat{p}(\varphi;y|\varphi_1=0)$, while $b$ would be used for computing the same quantities conditioned on $\varphi_1=1$. In each round, the leaves $\mathcal{L}$ of the current tree represent the set of partition sets corresponding to all random permutation of selected features information. Therefore, $\mathcal{L}$ provides us sufficient information for calculating the desired mutual information, and the algorithm only outputs the leaves $\mathcal{L}$, not the entire tree. The main algorithmic challenge is to efficiently calculate the marginal distributions $(\hat{p}(\varphi|S), \hat{p}(y|S)$ and joint distribution $\hat{p}(\varphi;y|S)$, which we are able to do using the tree structure.

The detailed structure of the FFFS algorithm used to compute the filtered feature set $\hat{\Phi}$ requires several subroutines, and the formal algorithmic construction for computing the filter is presented across Algorithms \ref{alg:fffs}, \ref{alg:Recur}, \ref{alg:sf}, and \ref{alg:bin} in Appendix \ref{app.fffs}. The main FFFS algorithm is Algorithm \ref{alg:fffs}, and it calls the subroutines for recursion (Algorithm \ref{alg:Recur}), selecting features (Algorithm \ref{alg:sf}), and partitions (Algorithms \ref{alg:bin}). Formal presentation of these algorithms, as well as detailed descriptions, are given in Appendix \ref{app.fffs}.

\section{Experimental Results}\label{s.globalexp}

In this section we measure the empirical performance of our explainer aggregation methodology against existing global explainer and aggregation methods. For our experiments we compare the performance of our integer programming method for aggregating local explainers against five state-of-the-art global explainer methods. We consider two local explainer aggregation methods---Submodular Pick and Anchor Points \citep{ribeiro2016should,ribeiro2018anchors}---and three global explainer methodologies---interpretable decision sets \citep{lakkaraju2016interpretable}, active learning decision trees \citep{bastani2018interpreting}, and naive decision tree global explainers \citep{friedman2001elements}. 

We compare these methods in both coverage and fidelity across two different datasets. These datasets are the Parkinson's Progression Marker Initiative (PPMI) \citep{PPMI} data set, where we generate explainers for a black box model aimed at predicting Parkinson's Disease (PD) progression subtypes, and a Geriatric activity data set \citep{torres2013sensor} where we generate explainers for a model that classifies the movement activities of geriatric patients based on wearable sensor data. One important feature of both these datasets is that they enable multi-class classification. The experimental results of this section show that our proposed optimization framework is better suited to these multi-class settings than existing state-of-the-art methods. 

In addition to measuring the performance of our local aggregation methodology on different data sets and classification tasks, we also compare the performance of our information-filter-based decision-tree local explainer and LIME \citep{ribeiro2016should} in the aggregation framework.  We also measure performance for each of the aggregation-based methods under varying budgets of component local explainers. This budget is an informal measure of simplicity and interpretability, where aggregating fewer local explainers leads to a more interpretable aggregate explainer, but may sacrifice fidelity and/or coverage. Our results show that our methodology outperforms existing techniques in terms of fidelity and coverage, especially in the multi-class case.

\subsection{PD Progression Cluster Classification}
For our first set of experiments we used the PPMI data set to classify the disease progression of different patients into several subtypes based on screening measures. The PPMI study was a long run observational clinical study designed to verify progression markers for PD. To achieve this aim, the study collected data from multiple sites and includes lab test data, imaging data, and genetic data, among other potentially relevant features for tracking PD progression. The study includes measurements of all these features for the participants across 8 years at regularly scheduled follow up appointments. The complete data set contains information on 779 patients, and included 548 patients diagnosed with PD or some other kind of Parkinsonism and 231 healthy individuals as a control group. For our analysis we will focus on the first seven visits of this study which correspond to a span of approximately 21 study months, since these visits were conducted relatively close together temporally.

The classification task considered was the disease progression of the patients, and we performed a cluster analysis to generate labels, detailed in Appendix \ref{sec:cluster}. Our analysis identified four different subtypes of disease progression, corresponding to different trajectories of the diagnostic measurements' evolution over time.  We also included one additional subtype corresponding to patients who did not have PD. Appendix \ref{sec:cluster} presents a full description of these subtypes and their identification in the data.

As our black box model, we trained a random forest model to predict the progression subtype of a patient based on measurements taken during the baseline appointment and follow ups.  We considered two different prediction tasks: first, a binary prediction task to predict whether or not an individual has PD; second, a multi-class prediction task to predict one of the five identified PD progression subtypes. Further details on the construction of the black box model and its performance on these prediction tasks are given Appendix \ref{sec:exp_res}.

\begin{figure}[t]
	\centering
	\begin{subfigure}[h]{0.49\textwidth}
		\includegraphics[width=\textwidth]{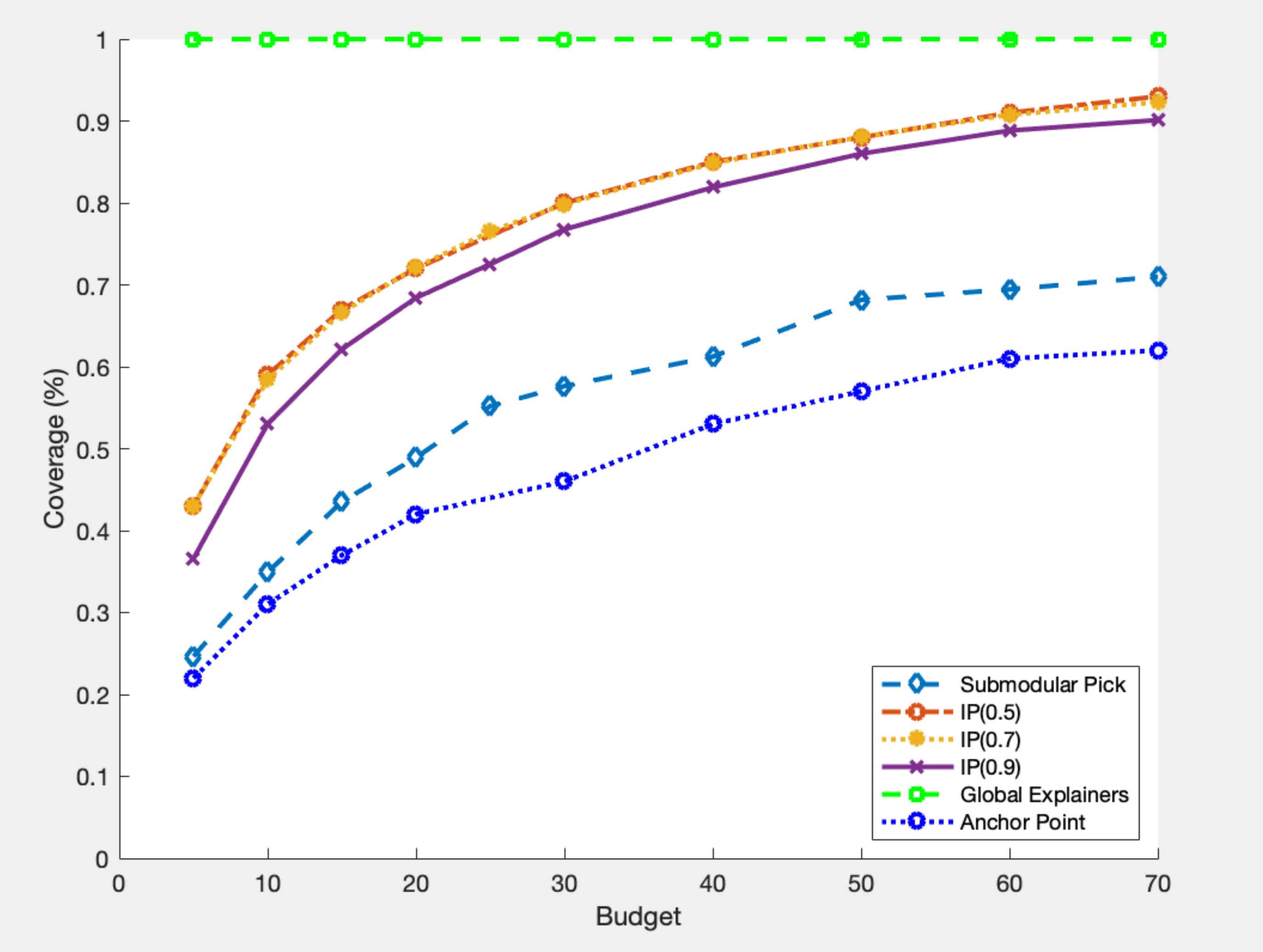}
		\caption{2 Class Coverage for PPMI Dataset}
		\label{fig:ppmi_binary_coverage}
	\end{subfigure}
	\begin{subfigure}[h]{0.49\textwidth}
		\includegraphics[width=\textwidth]{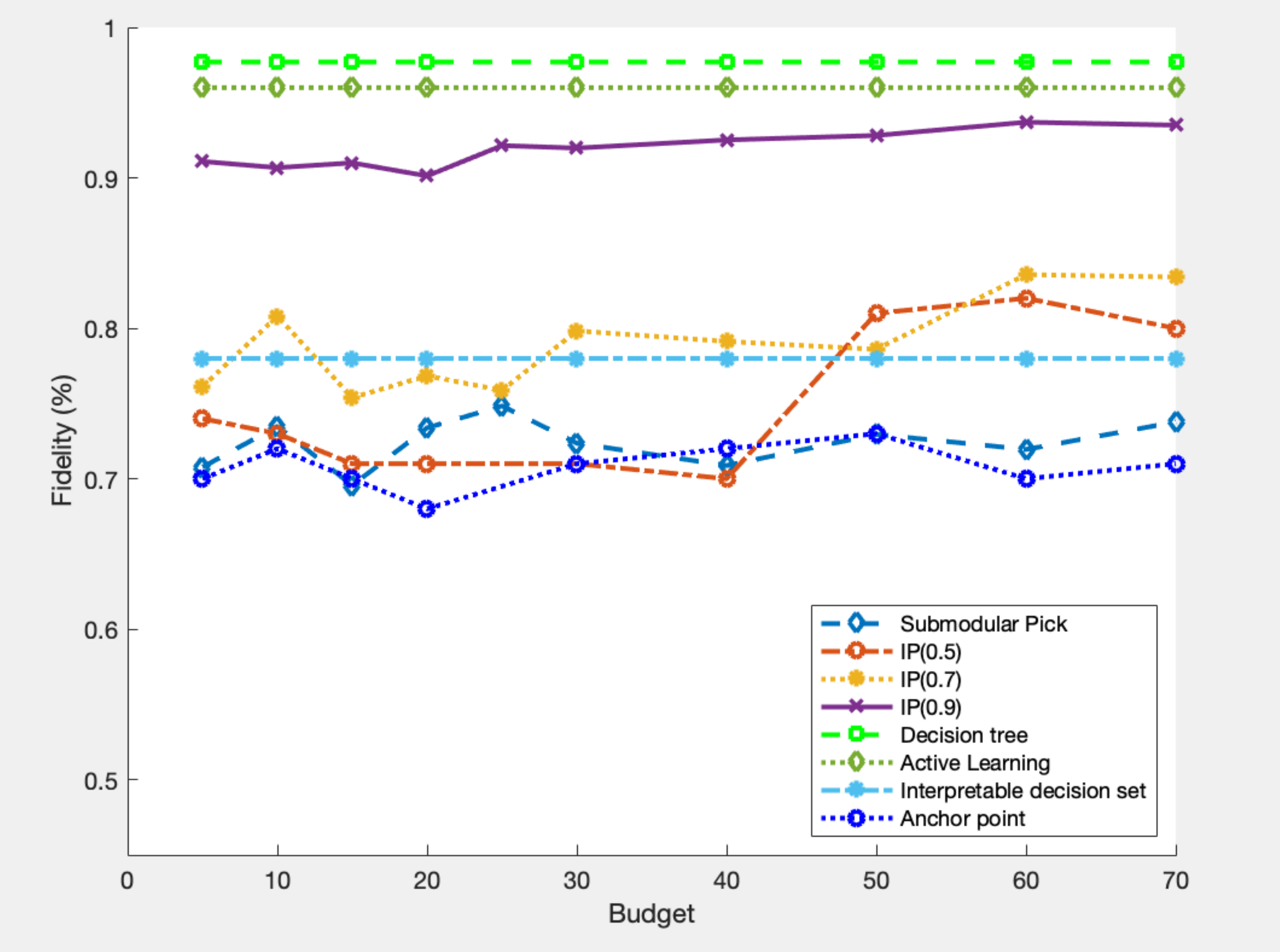}
		\caption{2 Class Fidelity for PPMI Dataset}
		\label{fig:ppmi_binary_fidelity}
	\end{subfigure}
\label{fig:ppmi_binary}
\caption{Fidelity and coverage plots for various global explainers for a random forest model trained on the PPMI data set. The x-axis corresponds to the number of constituent local explainers that are used by the aggregation methods.}
\end{figure}

\begin{figure}[h!]
	\centering
	\begin{subfigure}[h]{0.49\textwidth}
		\includegraphics[height=6cm]{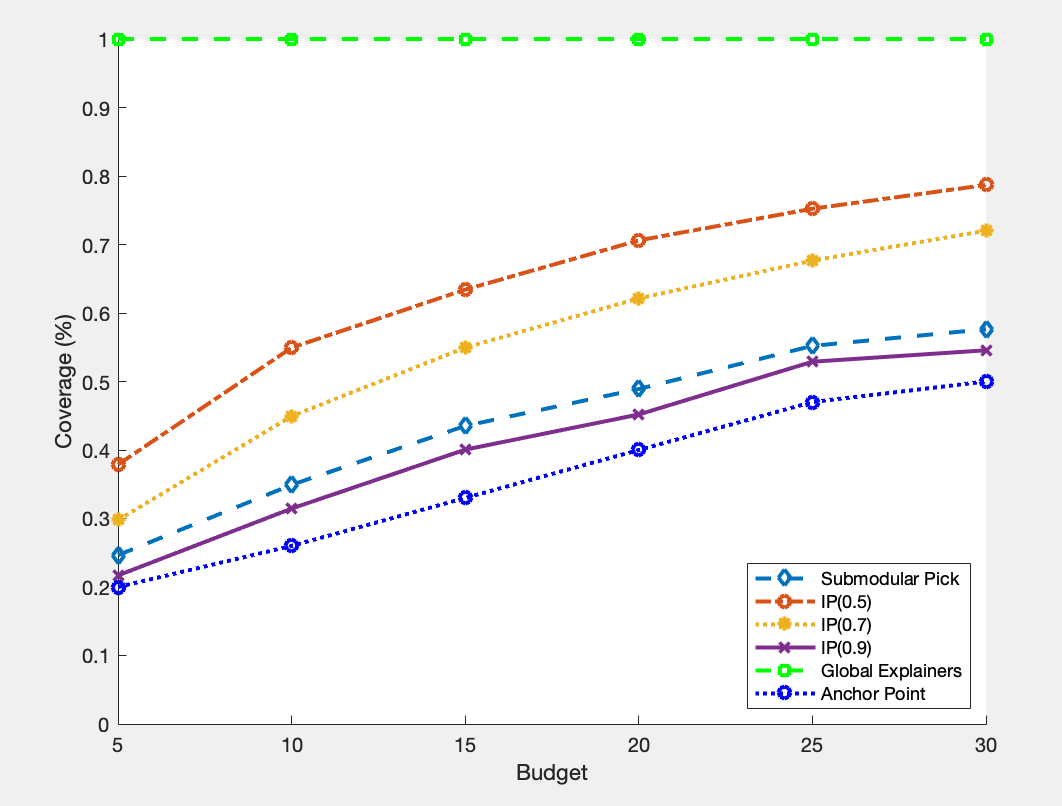}
		\caption{5 Class Coverage for PPMI Dataset}
		\label{fig:ppmi_multiclass_coverage}
	\end{subfigure}
	\begin{subfigure}[h]{0.49\textwidth}
		\includegraphics[height=6cm]{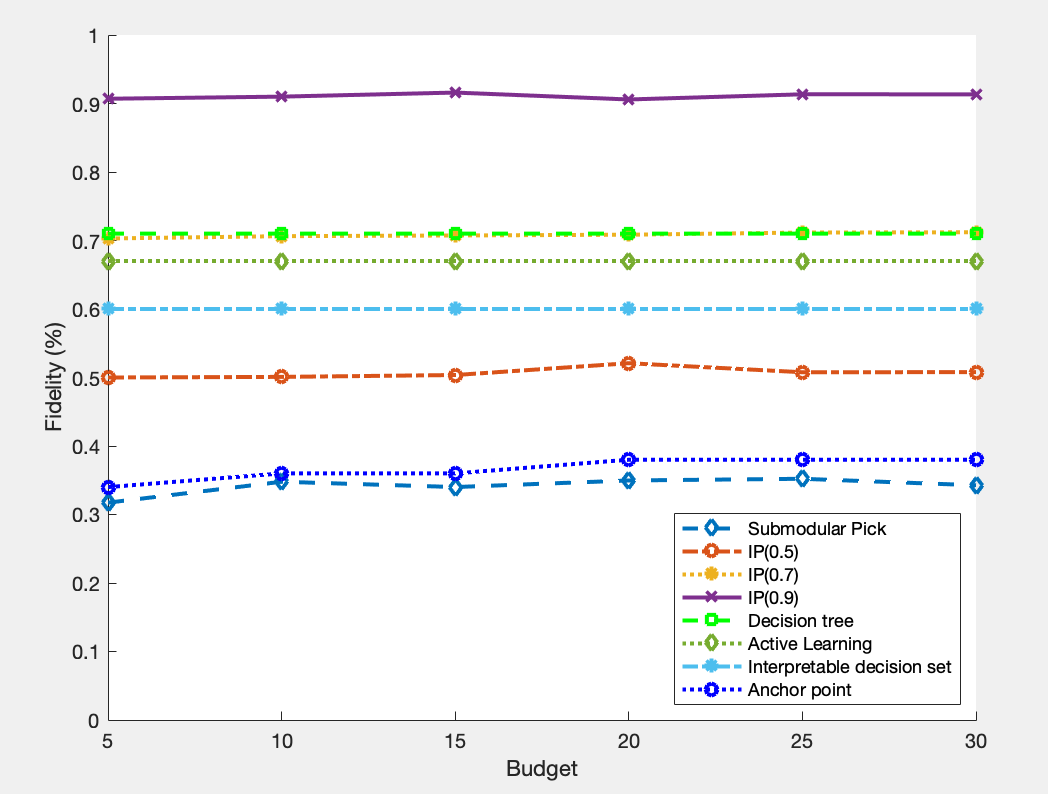}
		\caption{5 Class Fidelity for PPMI Dataset}
		\label{fig:ppmi_multiclass_fidelity}
	\end{subfigure}
	\caption{Fidelity and coverage plots for various global explainers for a random forest model trained on the PPMI data set. The x-axis corresponds to the number of constituent local explainers that are used by the aggregation methods.}
\end{figure}

We used each of the explainer methods presented above to explain the predictions made by these random forest models, and measured coverage and fidelity of these explainers.  Coverage and fidelity for the binary prediction task are shown in Figures \ref{fig:ppmi_binary_coverage} and \ref{fig:ppmi_binary_fidelity}, and similar plots for the multi-class prediction task are shown in Figures \ref{fig:ppmi_multiclass_coverage} and \ref{fig:ppmi_multiclass_fidelity}.

Figures \ref{fig:ppmi_binary_coverage} and \ref{fig:ppmi_multiclass_coverage} show that for both prediction tasks, our optimization-based aggregation algorithm obtains a higher level of coverage then both Anchor points \citep{ribeiro2018anchors} and Submodular Pick methods \citep{ribeiro2016should} across all possible local explainer budgets.   Note that when comparing coverage, global explainers are constraint to always achieve 100\% coverage.

 In terms of fidelity, Figure \ref{fig:ppmi_binary_fidelity} shows that across fidelity lower bounds of 0.7 and 0.5, our methodology performs comparably with the other aggregate explainer methods and with the explainable decision set method. When increasing our fidelity lower bound to 0.9, our method significantly outperforms these methods.  This shows that the fidelity lower bound parameter $\varphi$ in our framework allows for higher fidelity explainers given proper tuning. 
 
 In the binary case our methodology does not outperform active learning and naive decision tree in terms of fidelity or coverage; however, when considering the multi-class setting of Figure \ref{fig:ppmi_multiclass_fidelity}, we see that our framework allows for significantly higher fidelity explanations. In particular, while active learning and naive decision trees achieve a fidelity of approximately 0.7 our optimization based global classifier with $\varphi=0.9$ can achieve a fidelity of 0.9 in this case. While this is a significant increase, it does come with a cost for the coverage, as the explainer with this high fidelity only covers 40--50\% of the data, as compared to the global explainer methods of active learning and naive decision tree which cover 100\% of the data. 
 
With this in mind, our methodology allows for greater flexibility in terms of trading off explainer coverage and fidelity, especially in this multi-class case. In contrast, the pure global explainer methods do not allow for this trade-off by ensuring a hard constraint of 100\% coverage, which results in low fidelity explainers. Since our methodology out performs existing aggregation methods, this indicates that using mixed integer programming allows us to navigate the fidelity and coverage tradeoff more efficiently.

\subsubsection{Comparison of Local Explainer Performance in Aggregate}

\begin{figure}[tbh]
	\centering
	\begin{subfigure}[h]{0.49\textwidth}
		\includegraphics[height=6cm]{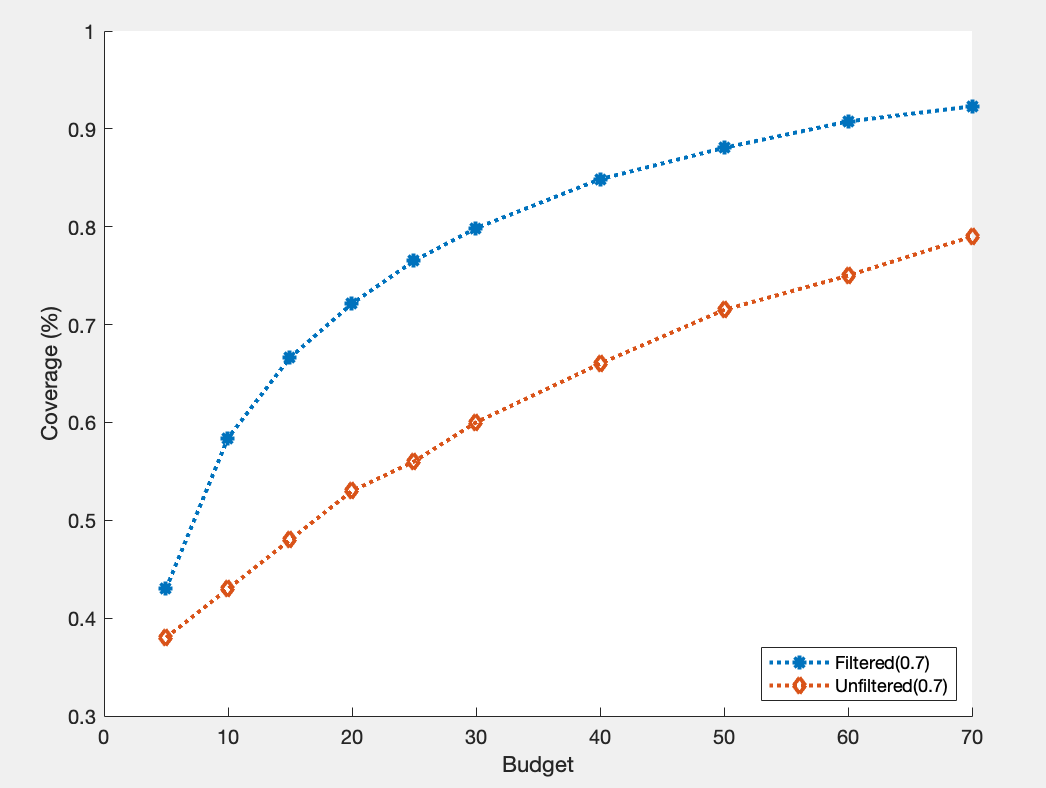}
		\caption{comparison of the coverage local methods for two classes}
		\label{fig:ppmi_binary_fixglobalcoverage}
	\end{subfigure}
	\begin{subfigure}[h]{0.49\textwidth}
		\includegraphics[height=6cm]{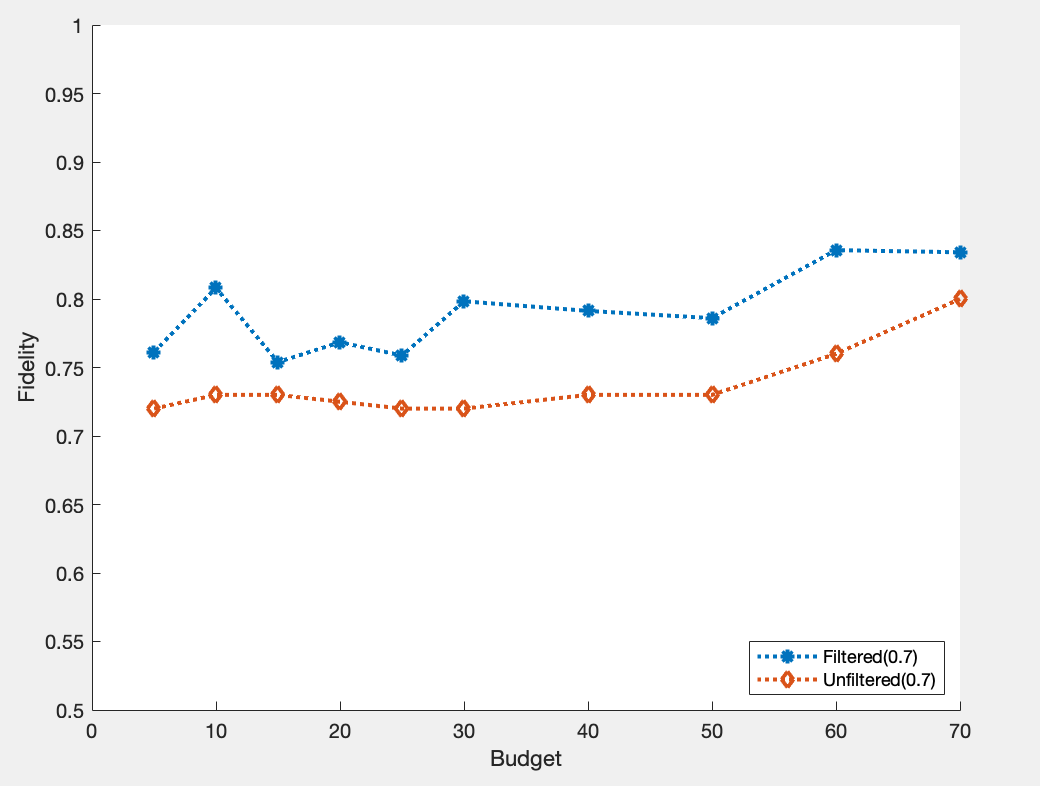}
		\caption{comparison of the fidelity of local methods for two classes}
		\label{fig:ppmi_binary_fixglobalfidelity}
	\end{subfigure}
\label{fig:ppmi_binaryfixglobal}
\caption{Fidelity and coverage plots for an IP based explainer aggregate using both an information filter based local explainer (labeled filtered) and LIME type local explainer (labeled unfiltered). These plots are for a binary classification task. The x-axis corresponds to the number of constituent local explainers that are used by the aggregation methods.}
\end{figure}

\begin{figure}[tbh]
	\centering
	\begin{subfigure}[h]{0.49\textwidth}
		\includegraphics[height=6cm]{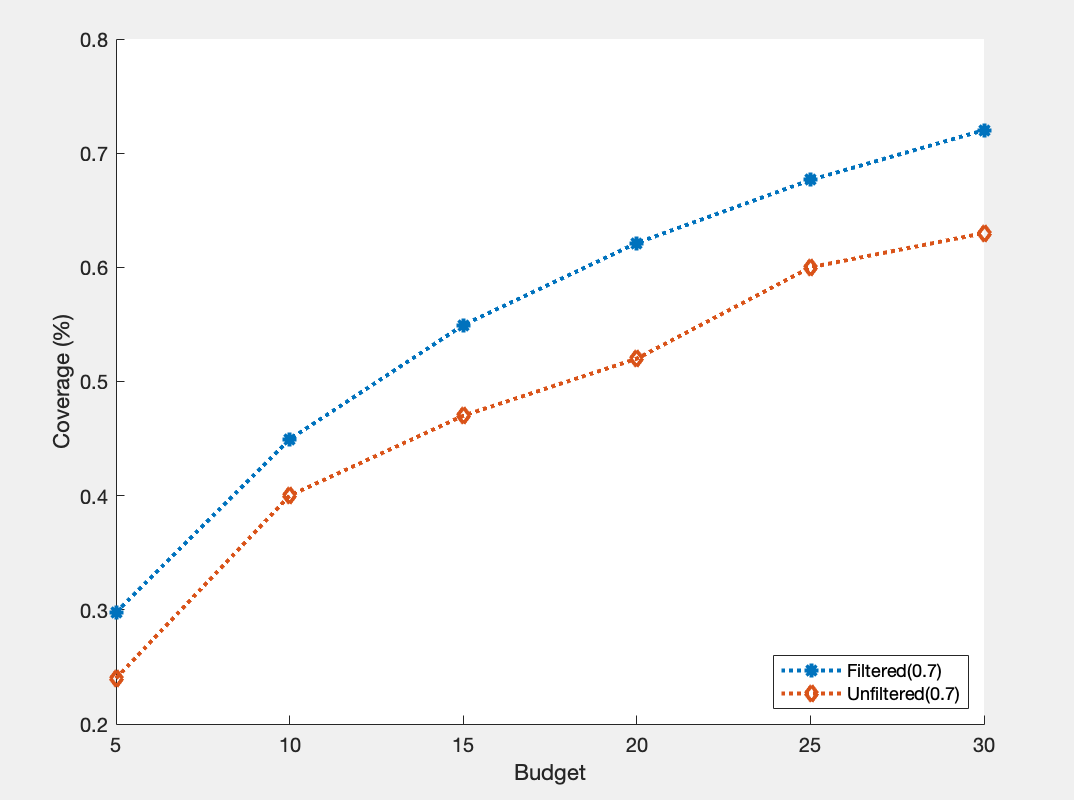}
		\caption{comparison of the coverage local methods for five classes}
		\label{fig:ppmi_multi_fixglobalcoverage}
	\end{subfigure}
	\begin{subfigure}[h]{0.49\textwidth}
		\includegraphics[height=6cm]{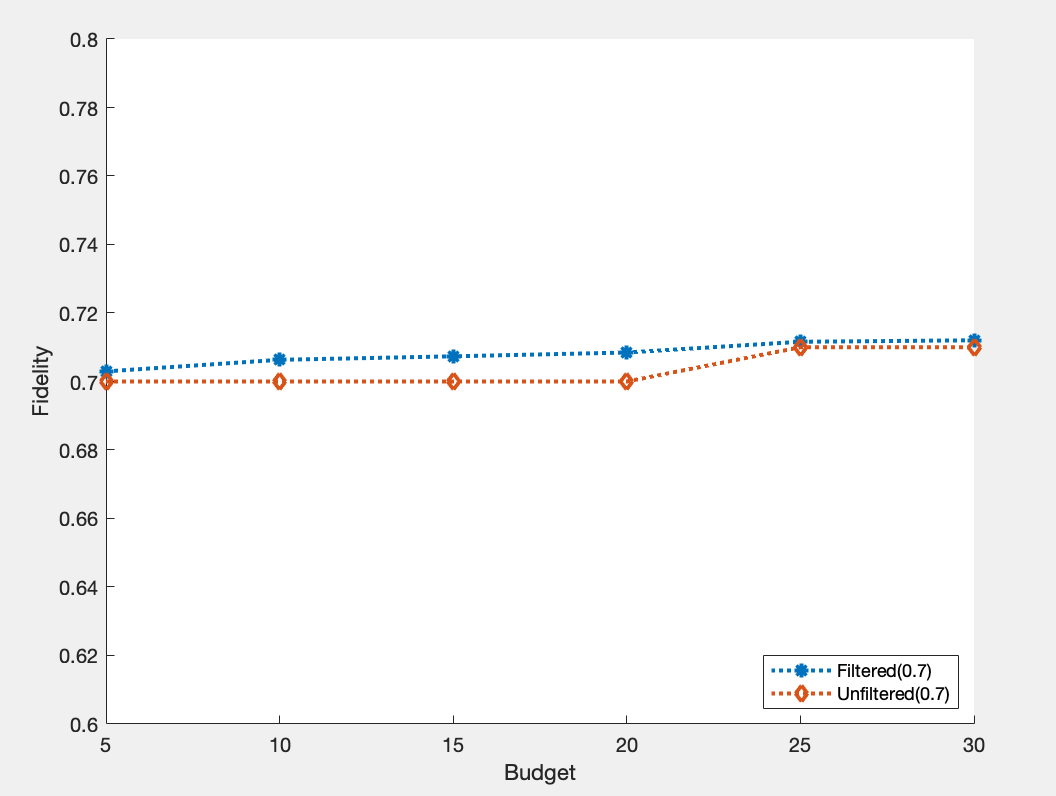}
		\caption{comparison of the fidelity of local methods for five classes}
		\label{fig:ppmi_multi_fixglobalfidelity}
	\end{subfigure}
\label{fig:ppmi_multifixglobal}
\caption{Fidelity and coverage plots for an IP based explainer aggregate using both an information filter based local explainer (labeled filtered) and LIME type local explainer (labeled unfiltered). These plots are for a multiclass classification task. The x-axis corresponds to the number of constituent local explainers that are used by the aggregation methods.}
\end{figure}

To evaluate the performance of our proposed local explainer methodology in the context of explainer aggregation, we considered the impact on aggregate fidelity and coverage of our aggregate explainer using different base local explainers. For this experiment we used our IP methodology as the mode of aggregation and evaluated the difference between using our proposed information filter based local explainer (labeled in the plots as ``filtered'') and LIME (labeled in the plots as ``unfiltered'') as the base local explainers to be aggregated. For these experiments, we fixed the lower bound on fidelity of the IP at 70\% and plotted both the coverage and fidelity of the aggregate with different explainer budgets for both binary prediction and multi-class prediction.

Figures \ref{fig:ppmi_binary_fixglobalcoverage} and \ref{fig:ppmi_binary_fixglobalfidelity} show the coverage and fidelity comparisons for the binary prediction class. We see that the use of our information-filter-based local explainer provides a better coverage and roughly 4\% higher fidelity score then those obtained by our aggregation method in conjunction with LIME across all budget levels. These results indicate that our prosed local explainer methodology leads to aggregate explainers that include both simpler component explainers, and can achieve improved coverage and fidelity in the binary classification case.  

Figures \ref{fig:ppmi_multi_fixglobalcoverage} and \ref{fig:ppmi_multi_fixglobalfidelity} show the coverage and fidelity  comparisons for the multi class prediction task. In this setting we again see that our proposed local explainer provides improved coverage and fidelity across all potential aggregate budgets. The advantage in the multi-class setting is less pronounced than in the binary prediction case, but our method still provides on average $5\%$ improvement in coverage over LIME for the resulting aggregate explainer.


\subsection{Geriatric Activity Classification}
For the second set of experiments we used a data set of Geriatric Activity based on the study conducted by \citep{torres2013sensor}. The main goal of this study was to provide ways of potentially reducing the likelihood of falls for geriatric individuals by classifying their activities when transferring beds. Generally, the highest risk for geriatric patients to fall is when getting out of bed so various sensors were deployed to detect whether an individual was attempting to leave their bed and detect other potentially risky activity. For this particular study, the authors used a novel wearable and environmental sensor which they validated with 14 individuals aged 66--86. The goal was to use this sensor data to classify between three different activities, namely laying in bed, sitting in the bed, and getting out of the bed. To generate the data set, each of the participants was asked to perform a random set of five activities which ranged between the three potential activity classes.

Much like in the case of the PPMI data set, we trained a random forest model to classify between the various activity classes that we used to extract global explainers. However, unlike the PPMI experiments, since there was no straight forward way to convert the multiclass classification task of detecting the different activities into a binary classification task we only performed the experiments for the multiclass case. The results for all explainer methods can be seen in Figures \ref{fig:geriatric_coverage} and \ref{fig:geriatric_fidelity}. Much like in the case for the PPMI data set, we note our methodology out performs other aggregation based global explainers with respect to coverage across all budgets and fidelity lower bounds; however, it is still not obtaining 100\% coverage like the pure global explainer methodologies. In terms of fidelity, much like in the multiclass case of the PPMI data, our methodology out performs all other global explainers, with active learning being close to on par with our performance. This further suggests that using this form of optimization based local explainer aggregation is well suited to explaining multiclass predictions regardless of the underlying data set.

\begin{figure}[h]
	\centering
	\begin{subfigure}[h]{0.49\textwidth}
		\includegraphics[height=6cm]{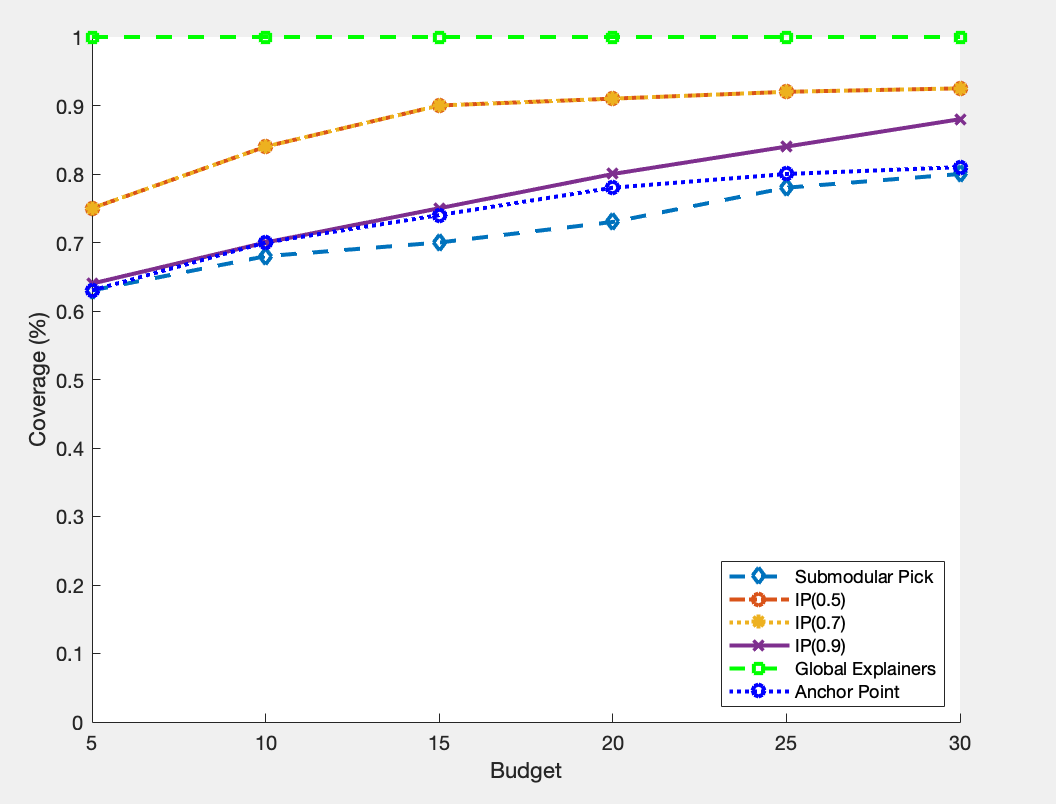}
		\caption{5 Class Coverage for Geriatric movement Dataset}
		\label{fig:geriatric_coverage}
		
	\end{subfigure}
	\begin{subfigure}[h]{0.49\textwidth}
		\includegraphics[height=6cm]{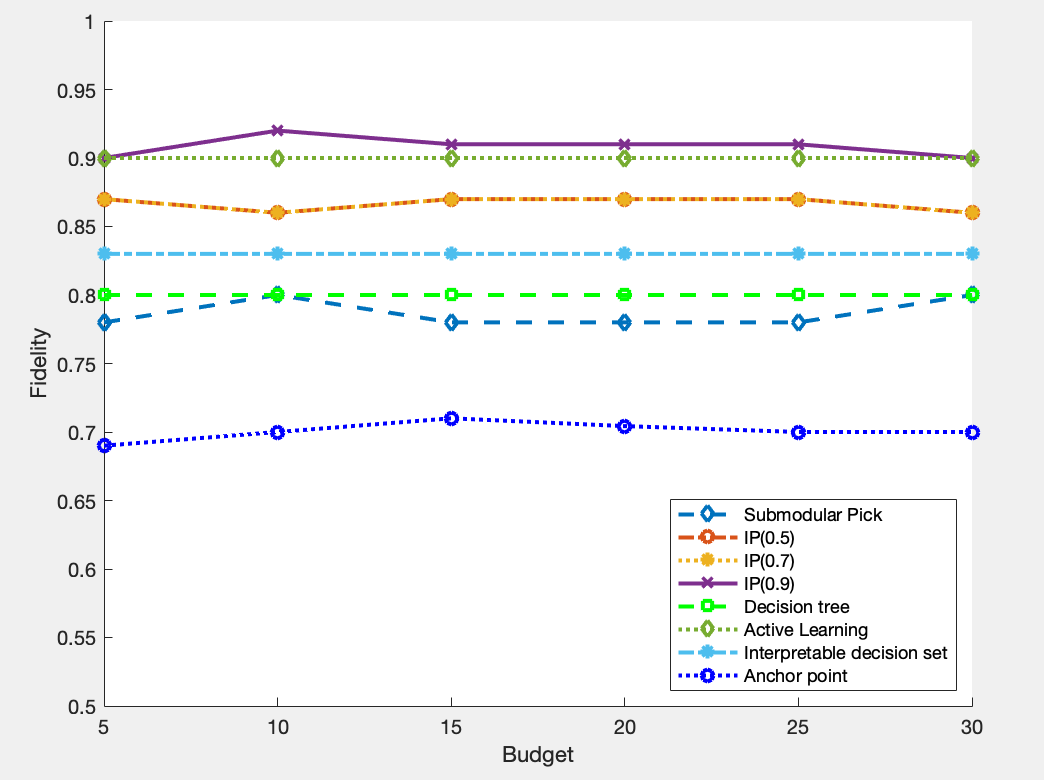}
		\caption{5 Class Fidelity for Geriatric movement Dataset}
		\label{fig:geriatric_fidelity}
	\end{subfigure}
	\caption{Fidelity and coverage plots for various global explainers for a random forest model trained on the Geriatric Movement data set. The x-axis corresponds to the number of constituent local explainers that are used by the aggregation methods.}
        
\end{figure}

\section{Discussion on Societal Implications}

Our aggregate explainer methodology provides explicit parameters that allow practitioners to clearly trade off among explainer coverage, fidelity, and interpretability.  We note that in this trade off, low fidelity also results in low transparency, because the explanations offered by the explainer diverge significantly from the black box predictions that are meant to be explained. For example, explainers used for diagnostics might want to weigh more towards coverage, while explainers used for prediction transparency might want to weigh more towards fidelity. Navigating this tradeoff efficiently is critical to ensure that practitioners can correctly inform users or patients of the ML predictions. These contributions are particularly valuable in medical applications or other settings where informed consent is required.


\bibliography{pdinterp}
\bibliographystyle{abbrvnat}

\newpage


\appendix

\section{Clustering Methodology and PPMI Dataset}
\label{sec:cluster}
PD is a complex disorder, and is often expressed differently by different patients, which has motivated the need to create PD sub-types to better direct treatment. While many existing data-driven methods focus on clustering patients based on their baseline measurements \citep{fereshtehnejad2017subtypes}, we propose clustering patients using the trajectory of how their symptoms progress. 

We will use data collected in the PPMI study \citep{PPMI}, which is a long run observational clinical study designed to verify progression markers for PD. To achieve this aim, the study collected data from multiple sites and includes lab test data, imaging data, genetic data, among other potentially relevant features for tracking PD progression. The study includes measurements of all these various values for the participants across 8 years at regularly scheduled follow up appointments. The complete data set contains information on 779 patients, and included 548 patients diagnosed with PD or some other kind of Parkinsonism and 231 healthy individuals as a control group.

\subsection{Determination of Criterion and Cluster Analysis}\label{s.crit}
Since there is significant heterogeneity in how PD symptoms are expressed, there also is no agreement on a single severity score or measurement that can be used as a surrogate for PD progression. Thus instead of considering a single score, we will model the severity of the disease as a multivariate vector, and the disease progression as the trajectory of this vector through a multidimensional space. Using the PPMI data \citep{PPMI} and other previous literature on PD progression \citep{rao2006parkinson,martinez2017rating,bhat2018parkinson}, we considered the following measures of severity to model disease progression:
\begin{itemize}
	\item Unified Parkinson's Disease Rating Scale (UPDRS) II \& III \citep{martinez1994unified}: The UPDRS is a questionnaire assessment that is commonly used to track symptoms of PD by an observer.  It consists of four major sections, each meant to measure a different aspect of the disease.  These sections are: (I) Mentation Behavior and Mood, which includes questions related to depression and cognitive impairment; (II) Activities for Daily Living, which includes questions related to simple daily actions such as hygiene and using tools; (III) Motor Examination, which includes questions related to tremors and other physical ticks; and (IV) Complications of Therapy, which attempts to assess any adverse affects of receiving treatment.  For our analysis we focused on the aggregate scores of sections II and III of the UPDRS to track physical symptoms of the disease.
	\item Montreal Cognitive Assessment (MoCA) \citep{nasreddine2005montreal}: Although not exclusively used for PD, the MoCA is a commonly used assessment for determining cognitive impairment and includes sections related to attention, executive functions, visual reasoning, and language. For our analysis, we used the MoCA scores of the individual patients as surrogates for their cognitive symptoms.
	\item Modified Schwab and England Activities of Daily Living Scale (MSES) \citep{siderowf2010schwab}: The MSES is a metric used to measure the difficulties that individuals face when trying to complete daily chores due to motor deficiencies. This assessment is generally administered at the same time as the UPDRS and is often appended as a section V or VI. We used this score as a measure of how much autonomy the patients experience based on their symptoms. 
\end{itemize}

We formed the empirical trajectory of these scores for each patient using the values measured during the patients' participation in the PPMI study \citep{PPMI}. For our cluster analysis we used longitudinal measurements that were taken across the first seven visits of the study corresponding to  a period of 21 months, where the first measurement formed the patient's baseline, and the next five measurements were taken at follow up visits at regular three month intervals; the final measurements were taken after six months.  We chose this timeline for our analysis because participation was high among all participants in the study during this period, so we did not have to exclude any patients, and visits were more frequent to better capture disease progression over time. After these seven measurements, follow-up visits were scheduled too infrequently to provide useful trajectory modeling information. 

We used these trajectories to cluster the patients together into progression sub-types. The main motivation for this approach is that if patients' severity scores progress in a similar way, then it may identify a useful sub-type for treatment design. Only patients diagnosed with PD were included in the cluster analysis, since we are interested in finding useful sub-types of disease progression. Each trajectory was then flattened out as a 28 dimensional vector, with the first four entries corresponding the measurements at baseline, the next four for the 3 month follow up, and so on. Using scikit-learn and Python 3.7, we performed $k$-means clustering on these trajectories to define our sub-types \citep{pedregosa2011scikit,friedman2001elements}. Using cross validation and the elbow method (as seen in Figure \ref{fig:elbow_plotl} in the appendix), we determined that there are four potential sub-types of disease progressions for the PPMI participants. We label these as: moderate physical symptoms cognitive decline cluster (Group 0), stagnant motor symptoms autonomy decline cluster (Group 1), motor symptom dominant cluster (Group 2), and moderate symptoms cluster (Group 3). The names we assigned to each individual cluster were given by the observed mean trajectories of the relevant scores for individuals that were classified into a particular cluster as shown in Figure \ref{fig:mean_traj}.

\begin{figure}[h]
	\centering
	\includegraphics[scale=0.65]{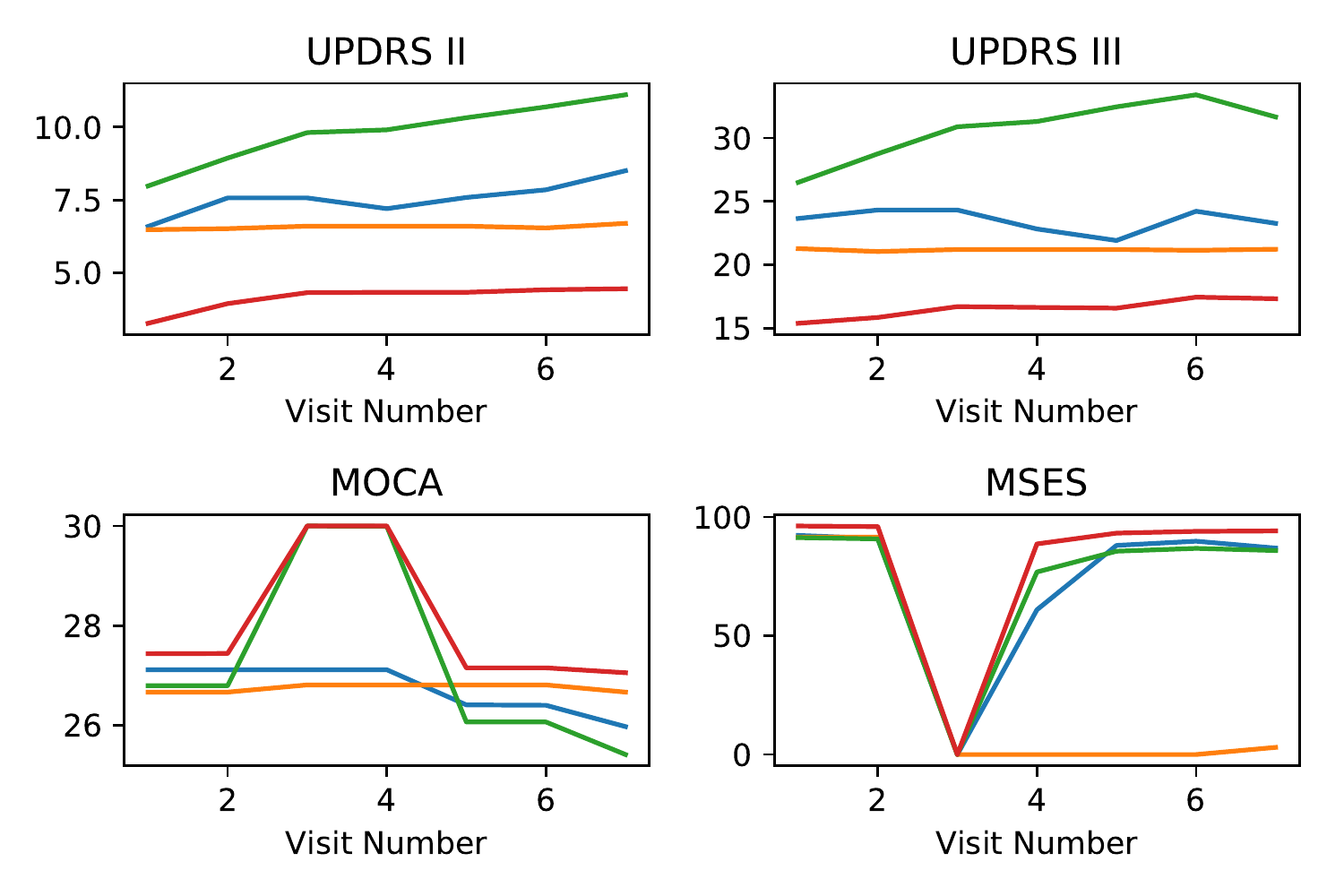}
	\caption{Mean trajectory progression for given score by cluster. Blue corresponds to Group 0, orange corresponds to Group 1, green corresponds to Group 2, and red corresponds to Group 3. The y-axis of each plot the is numerical value of the corresponding disease severity measure.}
	\label{fig:mean_traj}
\end{figure}

In Figure \ref{fig:projections} we show two 2-dimensional projections of the different cluster groups. Figure \ref{fig:pca_cluster} shows the projection onto the first two principal components of the data using PCA \citep{friedman2001elements}; this projection method is meant to preserve linear relationships among data points as well as distances between data points that are far apart.  The projection shown in Figure \ref{fig:tsne_cluster} corresponds to the tSNE projection of the data onto a two-dimensional space \citep{maaten2008visualizing}, this projection method was designed with manifolds in mind and is meant to preserve close distances (i.e., data points close in the tSNE projection should be also close in the higher dimensional space). Note that in both projections our resulting clusters are distinct and do not significantly overlap. 

\begin{figure}[h]
	\centering
	\begin{subfigure}{0.49\textwidth}
		\includegraphics[width=\textwidth]{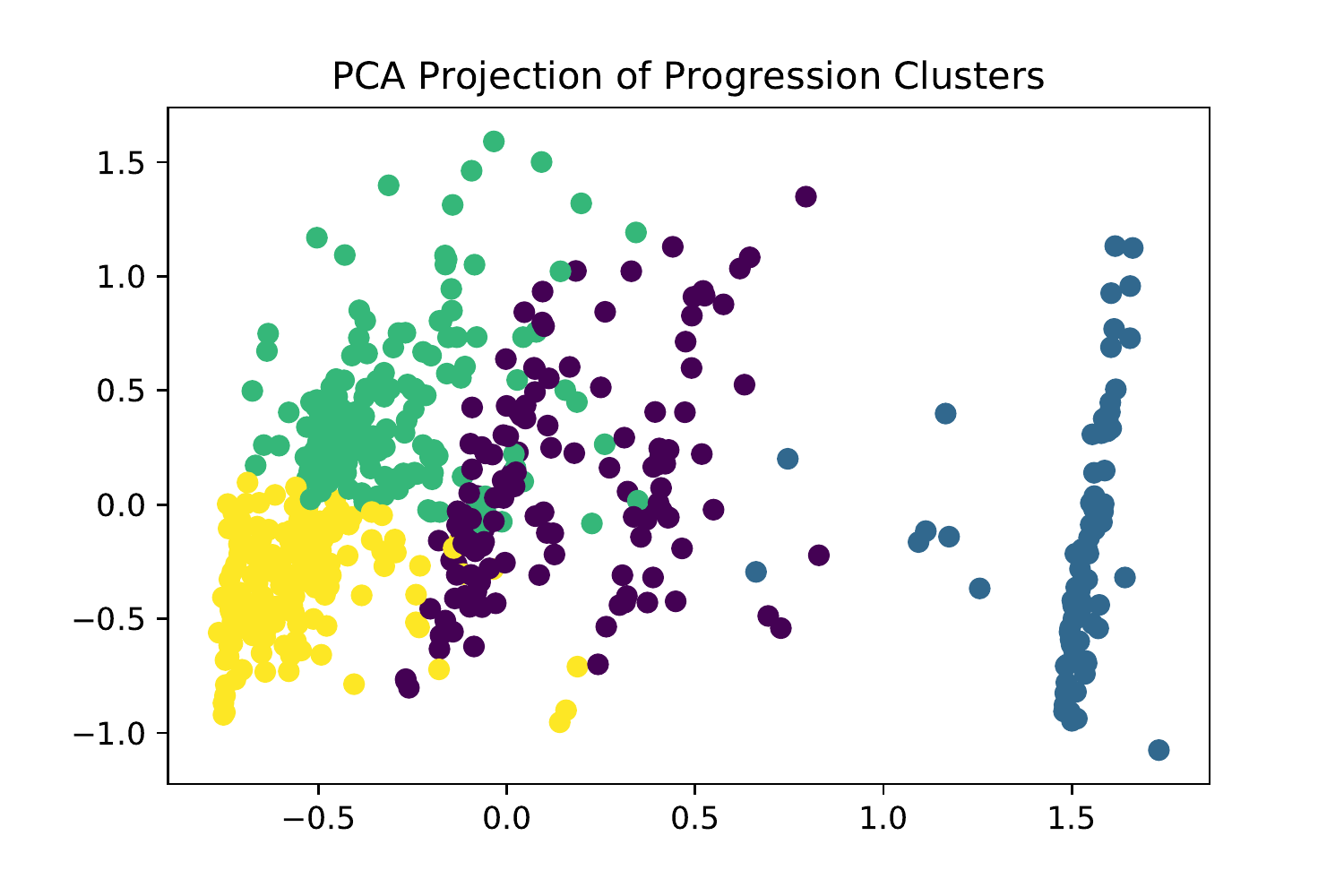}
		\caption{PCA Projection}
		\label{fig:pca_cluster}
	\end{subfigure}
	\begin{subfigure}{0.49\textwidth}
		\includegraphics[width=\textwidth]{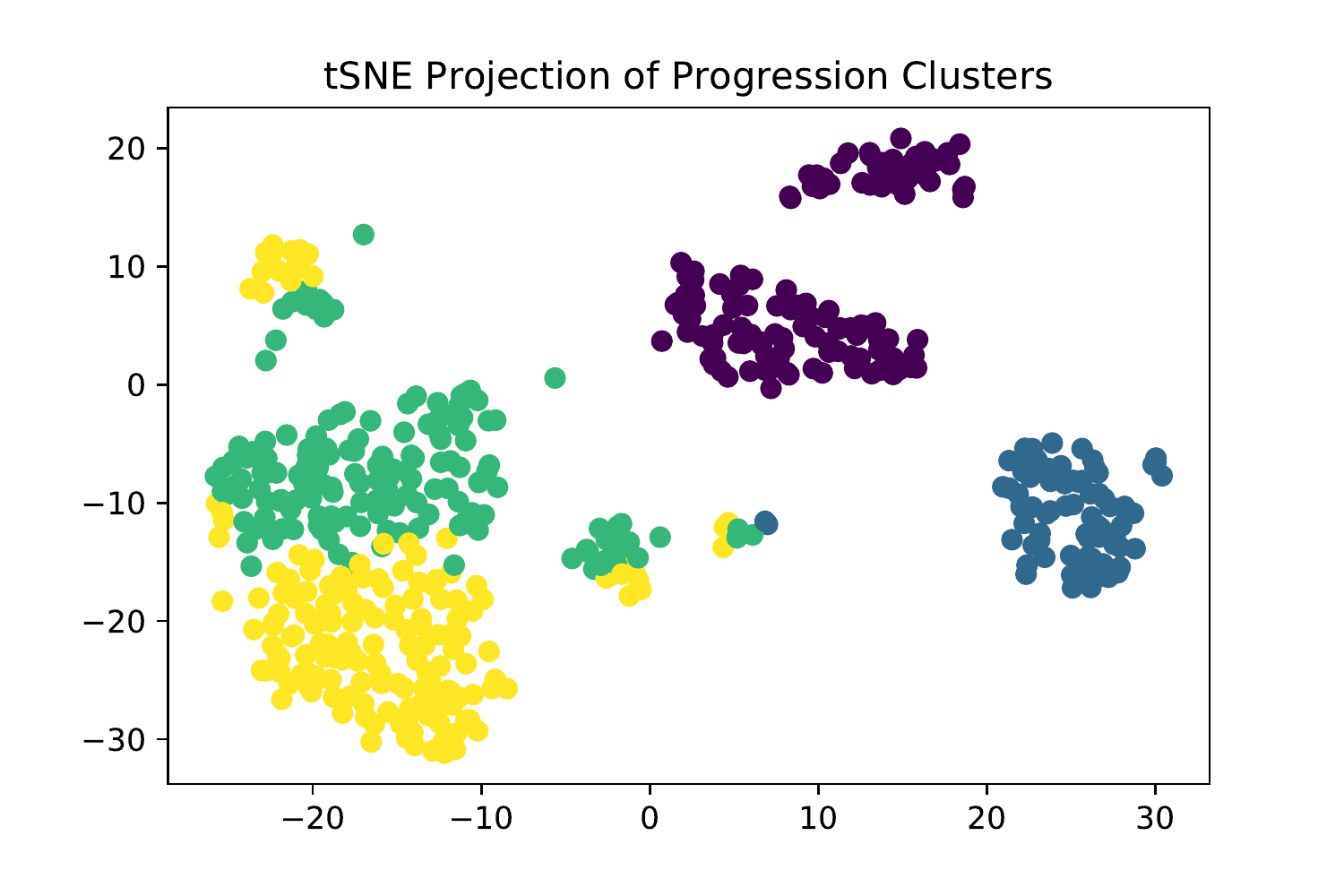}
		\caption{tSNE Projection}
		\label{fig:tsne_cluster}
	\end{subfigure}
	\caption{Two different 2-dimensional projections for visualizing trajectory clusters. Purple corresponds to Group 0, blue corresponds to Group 1, green corresponds to Group 2, and yellow corresponds to Group 3. }
	\label{fig:projections}
\end{figure}

\subsection{Validation of Clusters}
To test whether these clustered sub-types provide additional insight into the health of the patients, we performed several statistical comparisons of each patients' characteristics at baseline across all four sub-types plus healthy patients, to determine if there were any statistically significant differences. The results and values of these comparisons are presented in Table \ref{table:stat_comp} below.

\begin{table*}[tbh]
	\begin{tabular}{ll|llllll}
		& & Group 0   & Group 1   & Group 2   & Group 3  & Healthy   & p value                          \\
		\hline 
		\multicolumn{2}{l|}{Lymphocytes}                & 1.643$^m$    & 1.749     & 1.642$^n$    & 1.704$^p$   & 1.850$^{mnp}$  & 0.01                              \\
		\multicolumn{2}{l|}{REM Sleep Score}            & 5.549$^{de}$   & 1.892$^{dfgh}$ & 5.969$^{fij}$  & 5.087$^{gik}$ & 3.247$^{ehjk}$ & \textless{}0.001                   \\
		\multicolumn{2}{l|}{UPDRS part II}              & 6.594     & 6.482     & 7.981     & 3.272    & N/A       & \textless{}0.001                   \\
		\multicolumn{2}{l|}{UPDRS part III}             & 23.654    & 21.277    & 26.503    & 15.382   & N/A       & \textless{}0.001                   \\
		\multicolumn{2}{l|}{Schwab \& England Score}   & 92.256    & 91.506    & 91.321    & 96.214   & N/A       & \textless{}0.001                   \\
		\multicolumn{2}{l|}{Age}                        & 58.925$^a$   & 60.446    & 62.912$^{abc}$ & 58.387$^b$  & 59.571$^c$   & 0.02                               \\
		\multirow{3}{*}{Olfactory} & Anosmia   & 46        & 10        & 57       & 41        & 6                & \textless{}0.001 \\
		& Hyposmia  & 68        & 11        & 91       & 98        & 68               &                  \\
		& Normosmia & 19        & 5         & 11       & 34        & 122              &                  \\
		\multicolumn{2}{l|}{Race White}                 & 95.49\%   & 93.98\%   & 94.34\%   & 94.22\%  & 94.37\%   & 0.99                               \\
		\multicolumn{2}{l|}{Gender Male}                & 67.67\%   & 57.83\%   & 65.41\%   & 63.01\%  & 65.80\%   & 0.63                               \\
		\multicolumn{2}{l|}{Geriatric Depression Score} & 5.391     & 5.069     & 5.270     & 5.231    & 5.168     & 0.68                              
	\end{tabular}
	\caption{Comparison of baseline and screening measurements between clusters. p-values labeled in the table represent difference between all groups, and significant pairwise comparisons using a two sample T-test are marked by superscripts with p-values a-0.008; b-0.001; c-0.02;d,e,f,g,h,i,j,k-$<$0.001, m-0.003;n-0.004;p-0.04
	}
	\label{table:stat_comp}
\end{table*}

As seen in Table \ref{table:stat_comp}, many of the key screening measurements of the populations from the different clusters are significantly different, implying our clusters are informative about the health of individuals. In particular, we note that Group 0---which corresponds to moderate physical symptoms with cognitive decline---tends to be younger on average then the other groups, indicating this group may contain many more individuals with early onset PD. Moreover, the sub-types vary substantially in their sleep score and olfactory evaluation, which are both measures that have previously been shown to be strong indicators of PD \citep{rao2006parkinson} indicating that these progression sub-types are sensitive to these important predictors.

Overall, the comparisons shown in Table \ref{table:stat_comp} show that our data driven clusters are not only informative when comparing different forms of disease progression, but also correspond to variations in screening measurements. Based on this analysis, we believe that using screening data to predict these clusters could lead to clinically significant insights that can help with treatment.

\section{Local Explainer Algorithm}
\label{sec:loc_exp}
After identifying the four disease progression sub-types, we would like to predict which kind of disease progression an individual might experience, given measurements collected during a screening visit. As we will show in our experiments in Section \ref{sec:exp_res}, this task is best performed by complex black box models such as artificial neural networks (ANN) and bagged forests. This means that while the prediction may be accurate, it will not be easily explained, which make such models difficult to use for diagnosis recommendations. Our goal is to instead develop a method that trains simple auxiliary explainer models, and can still accurately describe the relationship between the data and the model output within a small region of a given prediction.

This methodology is known as \emph{training local explainer models} and has been shown to be useful in understanding black box predictions \citep{ribeiro2016should,ribeiro2018anchors}.  One of the key tradeoffs in generating model explanations is that of \emph{fidelity}---how well the explainer approximates the black box model---and \emph{interpretability}---how easy it is for a practitioner to trace the predictions of the model. In contrast to previous literature which has proposed the use of regularization to achieve this goal, we propose directly computing locally significant features using an information filter. Generally, computing such filters can be computationally expensive and requires the use of  numerical integration; however, one of  our main contributions in this paper is to introduce an efficient algorithm for filtering out less significant features. This methodology will allow us to train local explainers that are significantly less complex then those that use regularization, with better fidelity. 


\subsection{Local Explainer Notation}
\label{sec:notation}
Before proceeding to our discussion on the local explainer method, we will first establish some technical notation. We assume that for each patient $i = 1,...,n$ we have an ordered pair $(x_i,y_i)$, where $x_i \in \mathcal{X} \subseteq \mathbb{R}^m$ are the features values of the patient and $y_i \in \mathcal{L} \subseteq \mathbb{Z}$ is the corresponding class label generated by a black box model $f$. Through our analysis we will also refer to this set of points through matrix notation where $X \in \mathcal{X}^n \subseteq  \mathbb{R}^{m\times n}$ is the feature value matrix and $y \in \mathcal{L}^n \subseteq \mathbb{Z}^n$ is the vector of class labels, where each row in these matrices corresponds to a single patient's data. For our analysis we assume that $\mathcal{X}$ is a compact set. Let $\Phi= \{1,...,m\}$ be the set of features, and it may also be used to denote the index set of the features. This set can be partitioned into two sets $\Phi_c,\Phi_b \subseteq \Phi$ that represent the set of continuous and binary features respectively.

Furthermore we define the set-valued function $\Phi^*: \mathcal{X} \rightarrow \Phi$ as the function which extracts the minimum set of necessary features to accurately predict the class of a point $x$. Namely,
\begin{equation}
\Phi^*(x) = \argmin_{\varphi \subseteq \Phi}\{ |\varphi| : p(y|x) = p(y|x[\varphi])\},
\end{equation}
where $x[\varphi]$ is an indexing operation that maintains the values of $x$ but only for the features in $\varphi$, and $p$ is the conditional probability mass function of the labels $y$ given the observation of some features. Specifically, if a feature index is not included $\Phi^*(x)$, then it is not required to understand the particular 
label of $x$. In addition, we will denote the ball around a point $x$ of radius $r$ with respect to a metric $d$ as $\mathcal{B}(x,r,d)$.

Finally, a key feature of the explainer training method we propose includes the use of \emph{mutual information}. In information theory, mutual information is a quantity that measures how correlated two random variables are with one another. If $X,Y$ are two random variables with joint density $p$ and marginal densities $p_x,p_y$, then the mutual information between $X$ and $Y$ is denoted $I(X;Y)$ and calculated as:
\begin{equation}
I(x;y) = \mathbb{E}\log\frac{p(X,Y)}{p_x(X),p_y(Y)} = \int_{x}\int_{y} p(x,y) \log\frac{p(x,y)}{p_x(x),p_y(y)}dx dy. 
\end{equation}
If $X$ and $Y$ are independent then $I(X;Y) = 0$; otherwise $I(X;Y) > 0$, meaning that $X$ contains some information about $Y$. A similar quantity can be computed using a conditional distribution on another random variable $Z$, known as the \emph{conditional mutual information} and denoted $I(X;Y|Z)$.

\subsection{Local Explainer Algorithm Description}
Our main local explainer algorithm extends previous local explainer methods such as LIME \citep{ribeiro2016should} by restricting the sampling region around the prediction, and including an information filter to ensure that fewer features are included in the final explainer mode.

Our general local explainer is formally presented in Algorithm \ref{alg:loc_exp}, but we will give a brief overview of its operations here. The algorithm takes in hyper-parameters including number of points $N$ to be sampled for training the explainer, a distance metric $d$, and a radius $r$ around the point $\bar{x}$ being explained. First the algorithm samples $N$ points uniformly from within a $r$ radius of $\bar{x}$; we call this set of points $T(\bar{x})$. Depending on the distance metric being used this can often be done quite efficiently, especially if the features are binary valued or an $\ell^p$ metric is used \citep{barthe2005probabilistic}.  Then using the sampled points, the algorithm uses the Fast Forward Feature Selection (FFFS) algorithm as a subroutine (formally presented in Section \ref{sec.fffs} and Appendix \ref{app.fffs}), which uses an information filter to remove unnecessary features and reduce the complexity of the explainer model. The FFFS algorithm uses an estimate of the joint empirical distribution of $(T(\bar{x}),f(T(\bar{x}))$ to select the most important features for explaining the model's predictions in the given neighborhood. We denote this set of features $\hat{\Phi}$. Then, using these features and the selected points, the explainer model $g$ is trained by minimizing an appropriate loss function that attempts to match its predictions to those of the black box model.  In principle a regularization term can be added to the training loss of explainer $g$. However, through our empirical experiments in Section \ref{sec:exp_res} we found that FFFS typically selected at most five features, so even the unregularized models where not overly complex.

\begin{algorithm}
	\begin{algorithmic}[1]
		\caption{Local Explainer Training Algorithm}
		\Require sampling radius $r$, number of sample points $N$, black box model $f$, data point to be explained $\bar{x}$, and loss function $L$ for the explainer model $(\bar{x},\bar{y})$
		\State Initialize $T(\bar{x}) = \emptyset$
		\For {$j = \{1,...,N\}$}
		\State Sample $x \sim U(\mathcal{B}(\bar{x},r,d))$
		\State  $T(\bar{x}) \leftarrow T(\bar{x}) \cup x$
		\EndFor
		\State Obtain $\hat{\Phi}(\bar{x}) = \text{FFFS}(T(\bar{x}), \Phi, f)$
		\State Train $g = \argmin_{\hat{g} \in \mathcal{G}}\{\sum_{x\in T(\bar{x})}L(f(x) - \hat{g}(x[\hat{\Phi}])) \}$
		\State \Return g
	\end{algorithmic}
\end{algorithm}

\subsection{Fast Forward Selection Information Filter}\label{sec.fffs}
A key step in our algorithm is the use of a mutual information filter to reduce the number of features that will be included in the training of the local explainer. Mutual information filters are commonly used in various signal processing and machine learning applications to assist in feature selection \citep{brown2012conditional}. However, these filters can be quite challenging to compute depending on the structure of the joint density function of the features and labels, and can require the use of (computationally expensive) numerical integration. We counteract this by considering an approximation of the density function, using histograms to calculate continuous features. When multiple combinations of features need to be considered as in our setting, the problem of finding the maximum-information minimum-sized feature set is known to be computationally infeasible \citep{brown2012conditional}. As such, our proposed method for computing the filter includes a common heuristic known as \emph{forward selection}, which essentially chooses the next best feature to be included in the selected feature pool in a greedy manner.  Using this method alone would still require recomputing the conditional distribution of the data based on previously selected features, which can result in long run times for large $N$. However, using some prepossessing techniques, we can show that these quantities can be stored efficiently using a tree structure, which allows quick computation of the filter.

The general idea of the FFFS algorithm is to consider the feature selection process as a tree construction. Part of this construction relies on an estimate of the empirical density of the features as a histogram with at most $B$ bins and preprocessed summary tensor $M \in \{0,1\}^{B\times |\Phi| \times N}$ which indicates which bin of the histogram a feature value for a particular data point lays in. For each entry, $M[b,\varphi,x] = 1$ if the value of feature $\varphi$ at point $x$ falls in the bin $b$. Otherwise, $M[b,\varphi,x] = 0$.  The depth of the tree represents the number of selected features and each node of the tree is a subset of $T(\bar{x})$. For instance, at the beginning of the selection process, we have a tree with exactly one node $R$ where $R=T(\bar{x})$.  Assume binary feature $\varphi_1$ is selected in the first round. Then two nodes $a,b$ are added under $R$, where $a = \{x_j: M[1,\varphi_1,j]=1\}$ and $b = \{x_j: M(2,\varphi_1,j)=1\}$. In the second round, we use the partition sets $a,b$ to compute the mutual information instead of the complete set $R$. The set $a$ is used for computing $\hat{p}(\varphi|\varphi_1=1),\; \hat{p}(y|\varphi_1=1) \text{, and }\hat{p}(\varphi;y|\varphi_1=1)$, while $b$ is used when the condition is $\varphi_1=2$. In each round the leaves $\mathcal{L}$ of the current tree represent the set of partition sets corresponding to all random permutation of selected features information. Therefore, $\mathcal{L}$ provides us sufficient information for calculating the desired mutual information. As shown in Algorithm \ref{alg:sf}, the algorithm only outputs the leaves $\mathcal{L}$, not the entire tree. The main algorithmic challenge is to efficiently calculate the marginal distributions $(\hat{p}(\varphi|S), \hat{p}(y|S)$ and joint distribution $\hat{p}(\varphi;y|S)$, which we are able to do using the tree structure.

The detailed structure of the FFFS algorithm used to compute the filtered feature set $\hat{\Phi}$ requires several subroutines, and the formal algorithmic construction for computing the filter is presented across Algorithms \ref{alg:fffs}, \ref{alg:Recur}, \ref{alg:sf}, and \ref{alg:bin}. The main FFFS algorithm is Algorithm \ref{alg:fffs}, and it calls the subroutines for recursion (Algorithm \ref{alg:Recur}), selecting features (Algorithm \ref{alg:sf}), and partitions (Algorithms \ref{alg:bin}). Formal presentation of these algorithms, as well as detailed descriptions, are given in Appendix \ref{app.fffs}.

\section{Experimental Validation of Local Explainer}
\label{sec:exp_res}

In this section we empirically evaluate the quality of our local explainer methodology by first showing that accurate sub-type predictions of our PD sub-type clusters (as described in Section \ref{sec:cluster}) can be achieved using black-box methods applied to the data of individuals measured during the screening visit. We then apply our local explainer methodology developed in Section \ref{sec:loc_exp} to explain the predictions given by these black-box models.





Our clusters were derived from longitudinal measurements of the four metrics of disease severity described in Section \ref{s.crit}, measured across the first seven visits in the study over a period of 21 months.  Treating these cluster (and the healthy patients) as our ground truth class labels, we first train black box machine learning models to predict which of these progression sub-types an individual will most likely experience given her screening data. This is meant to model the data available to a physician when she must make treatment decisions for a new patient.  From screening data in the PPMI data set, we included the following 31 features: PTT, Lymphocytes, Hematocrit, Eyes, Psychiatric, Head-Neck-Lymphatic, Musculoskeletal, Sleep Score, Education Years, Geriatric Depression Score, Left Handed, Right Handed, Gender Male, Female Childbearing, Race White, Race Hispanic, Race American Indian, Race Asian, Race Black, Race PI, Anosmia, Hyponosmia, Normosmia, MRI Normal, MRI Abnormal Insignificant, MRI Abnormal Significant, BL/SC UPDRSII, BL/SC UPDRSIII, BL/SC MOCA, BL/SC MSES, and BL/SC Age. Among these 31 features, 20 features are binary variables and 11 features are continuous variables.

For accurate sub-type predictions using this data, in Section \ref{s.mlcluster} we trained three machine learning prediction models: one interpretable model (logistic regression) and two complex black box models (a feed forward ANN and a bagged forest). Our results indicate that the black box models outperform the simpler model, which necessitates the use of a local explainer method for this application to achieve both accurate classification and explainability.

In Section \ref{s.localval} we computed local explanations based on the random forest model predictions (which was the model with the highest accuracy) using our proposed FFFS method with the information filter and a local explainer method.  This is analogous to LIME \citep{ribeiro2016should} which does not contain an information filter. Our results show that given a requirement of high explainer fidelity, the use of the information filter will result in less complex explainer models. All experiments described in this section were run on a laptop computer with a 1.2GHz Intel Core m3-7Y32 processor and MATLAB version R2019a with the machine learning and deep learning tool kits \citep{MATLAB:2010}.





\subsection{Machine Learning Models for Cluster Prediction}\label{s.mlcluster}

We considered three different kinds of machine learning models for the task of predicting the progression cluster: logistic regression, feed forward ANN, and a bagged forest model. The patient data was split into training, validation, and testing sets with $70\%$ of the data used for training, $15\%$ for validation, and $15\%$ for testing. Among 779 patients, 545 patients were selected for training, and 117 patients were selected for validation and testing. 

Since bagged forests and ANNs are sensitive to hyperparamter settings, we used cross-validation to set their respective hyperparamters. Using cross validation and MATLAB's hyperparemeter optimization methods we found that the most effective ANN architecture for our task was with a single hidden layer containing one hundred hidden ReLu units. For the random forest model, we found that an ensemble of 50 bagged trees gave the best results compared to other forest sizes. 

Figures \ref{fig:cm} and \ref{fig:roc} show the performance of the models on the same training, validation, and testing sets. In both figures, the classes 1-4 correspond to Groups 0-3, and class 5 corresponds to healthy patients (which we will also call Group 4). Figure \ref{fig:cm} contains the confusion matrix for each model. The rows of the matrix are the \emph{output class}, which represents the predicted class, and the columns of the matrix are the \emph{target class}, which is the true class. The cells on the diagonal of the matrix count accurate predictions. Each cell in the rightmost column has two values: the top number is the percentage of patients that are correctly predicted to each class, and the bottom number is the percentage of patients that are incorrectly predicted to each class. For each cell on the bottom row, the top number is the percentage of patients that belong to each class and is correctly predicted, and the bottom number is the percentage of patients that are incorrectly predicted. For the rest of cells in the matrix, the number in each cell counts for the number of patients that fall in this observation. The cell at the bottom right corner of each matrix shows the total percentage of patients that were correctly and incorrectly predicted.

As shown in Figures \ref{fig:cm} and \ref{fig:roc}, the logistic regression model under-performs relative to the ANN and bagged forest models. Even though the bagged forest model has a lower prediction rate for Group 0 compared to the ANN, it has equal or higher rates of accurate prediction for the other classes. Additionally, the bagged forest model consistently performed better than the ANN and logistic regression models in our experiments. We concluded from these results that the bagged forest classification model is the most effective for our prediction task, and we chose to consider its predictions when evaluating our local explainer method.

\begin{figure}[h]
	\centering
	\begin{subfigure}[h]{0.32\textwidth}            
		\includegraphics[width=\textwidth]{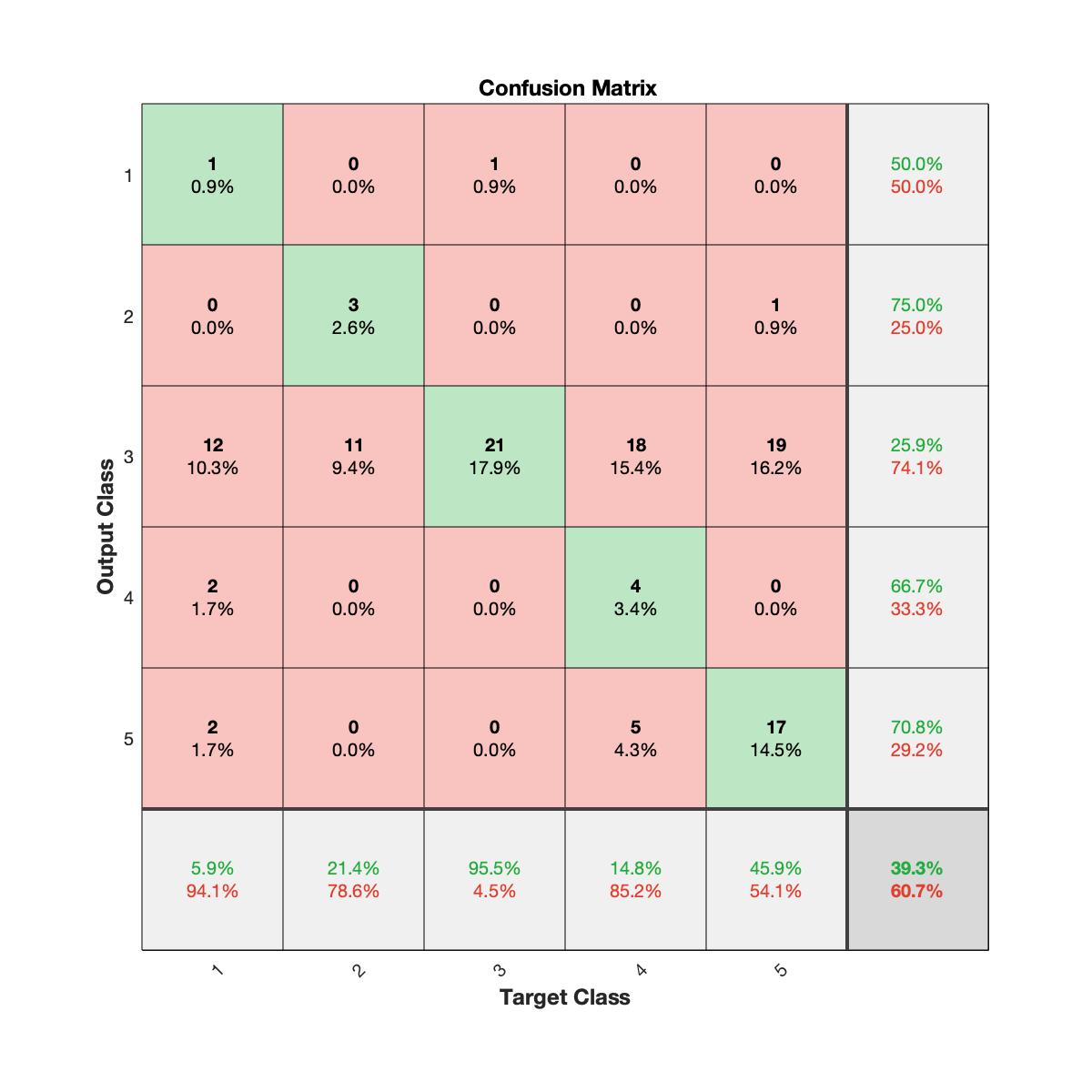}
		\caption{Logistic Regression}
		\label{fig:l1}
	\end{subfigure}
	%
	\begin{subfigure}[h]{0.32\textwidth}
		\centering
		\includegraphics[width=\textwidth]{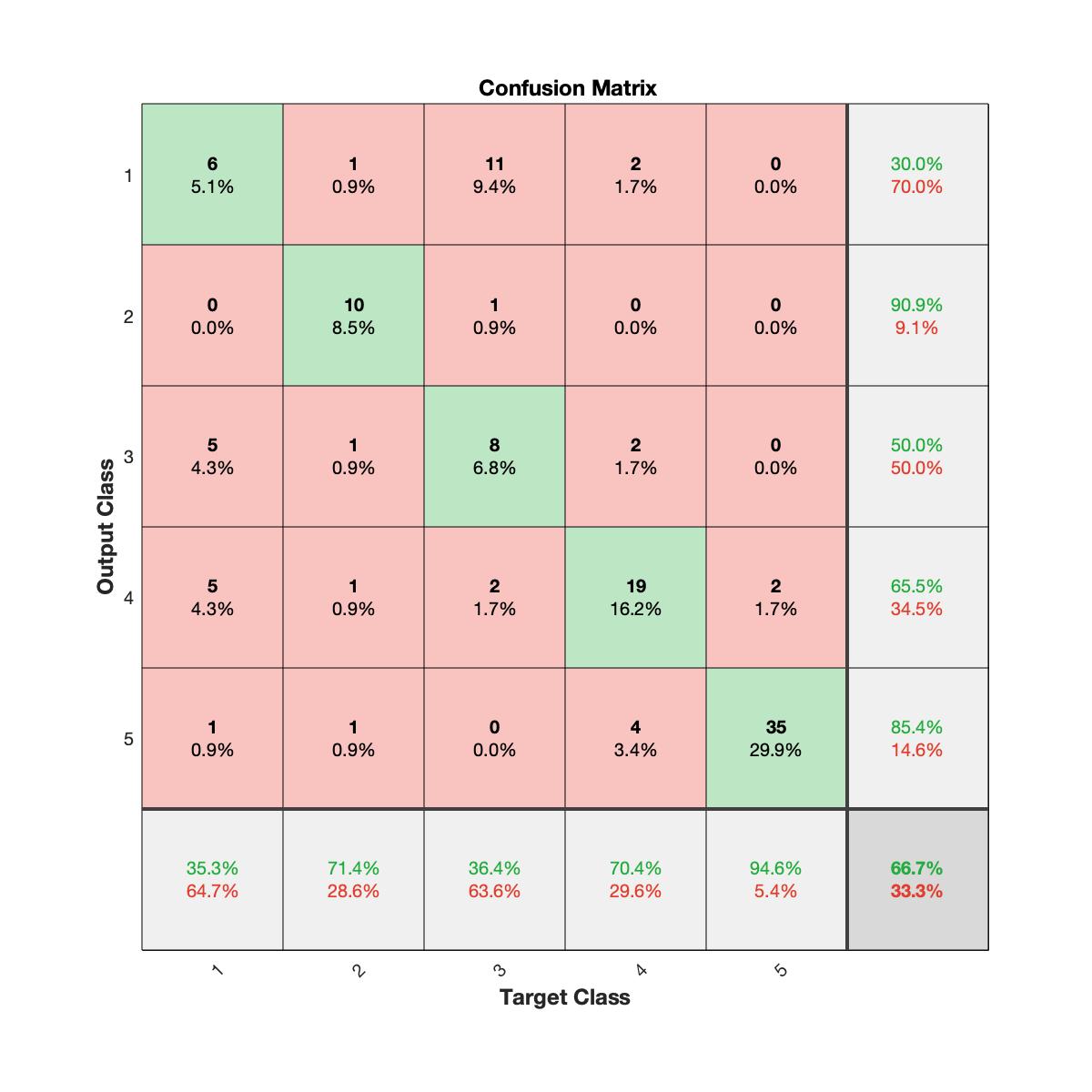}
		\caption{Neural Network}
		\label{fig:n1}
	\end{subfigure}
	\begin{subfigure}[h]{0.32\textwidth}
		\centering
		\includegraphics[width=\textwidth]{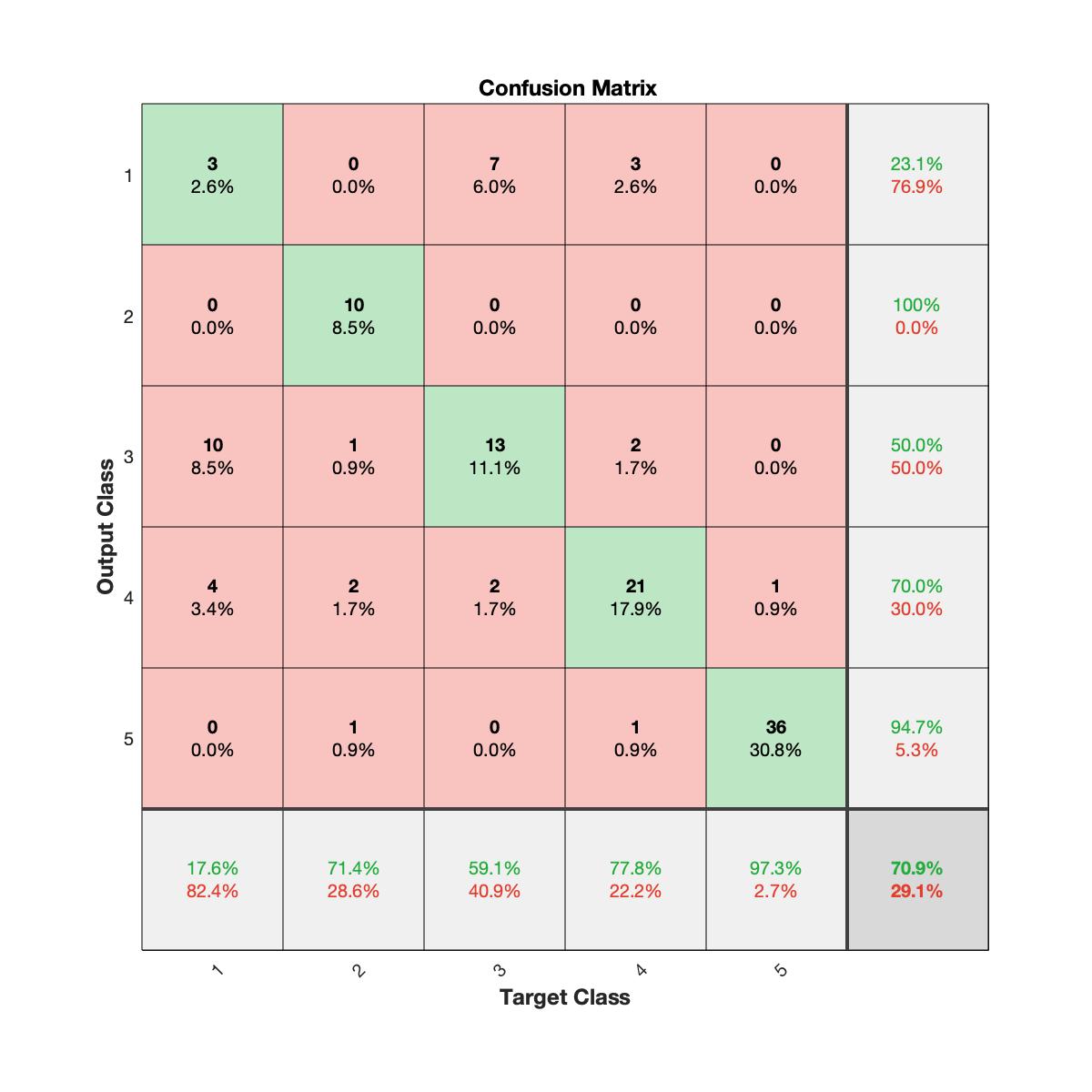}
		\caption{Random Forest}
		\label{fig:r1}
	\end{subfigure}
	
	\caption{Confusion Matrices}\label{fig:cm}
\end{figure}

\begin{figure}
	\centering
	\begin{subfigure}[h]{0.32\textwidth}
		\includegraphics[width=\textwidth]{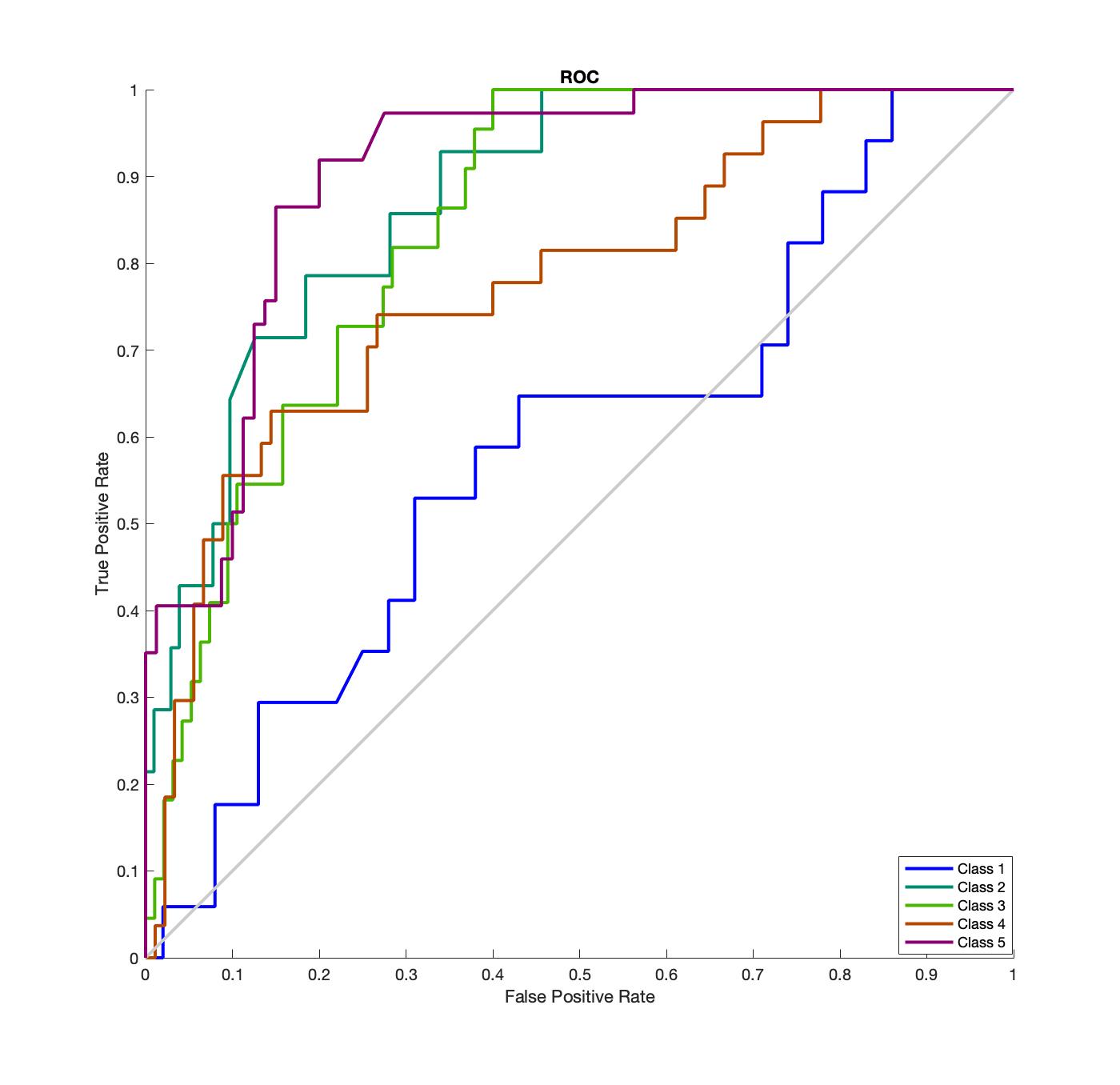}
		\caption{Logistic Regression}
		\label{fig:l2}
	\end{subfigure}
	%
	\begin{subfigure}[h]{0.32\textwidth}
		\centering
		\includegraphics[width=\textwidth]{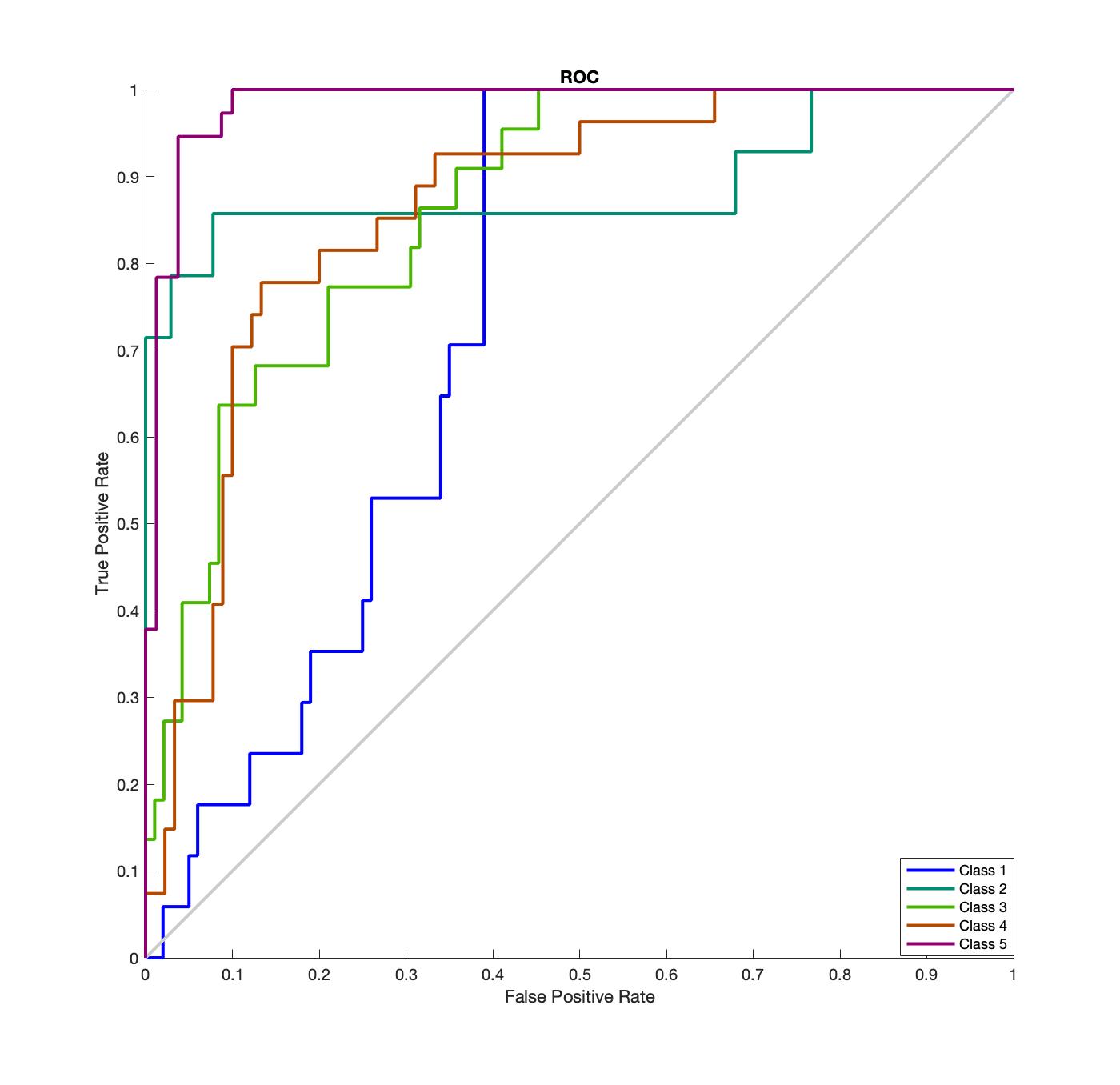}
		\caption{Neural Network}
		\label{fig:n2}
	\end{subfigure}
	\begin{subfigure}[h]{0.32\textwidth}
		\centering
		\includegraphics[width=\textwidth]{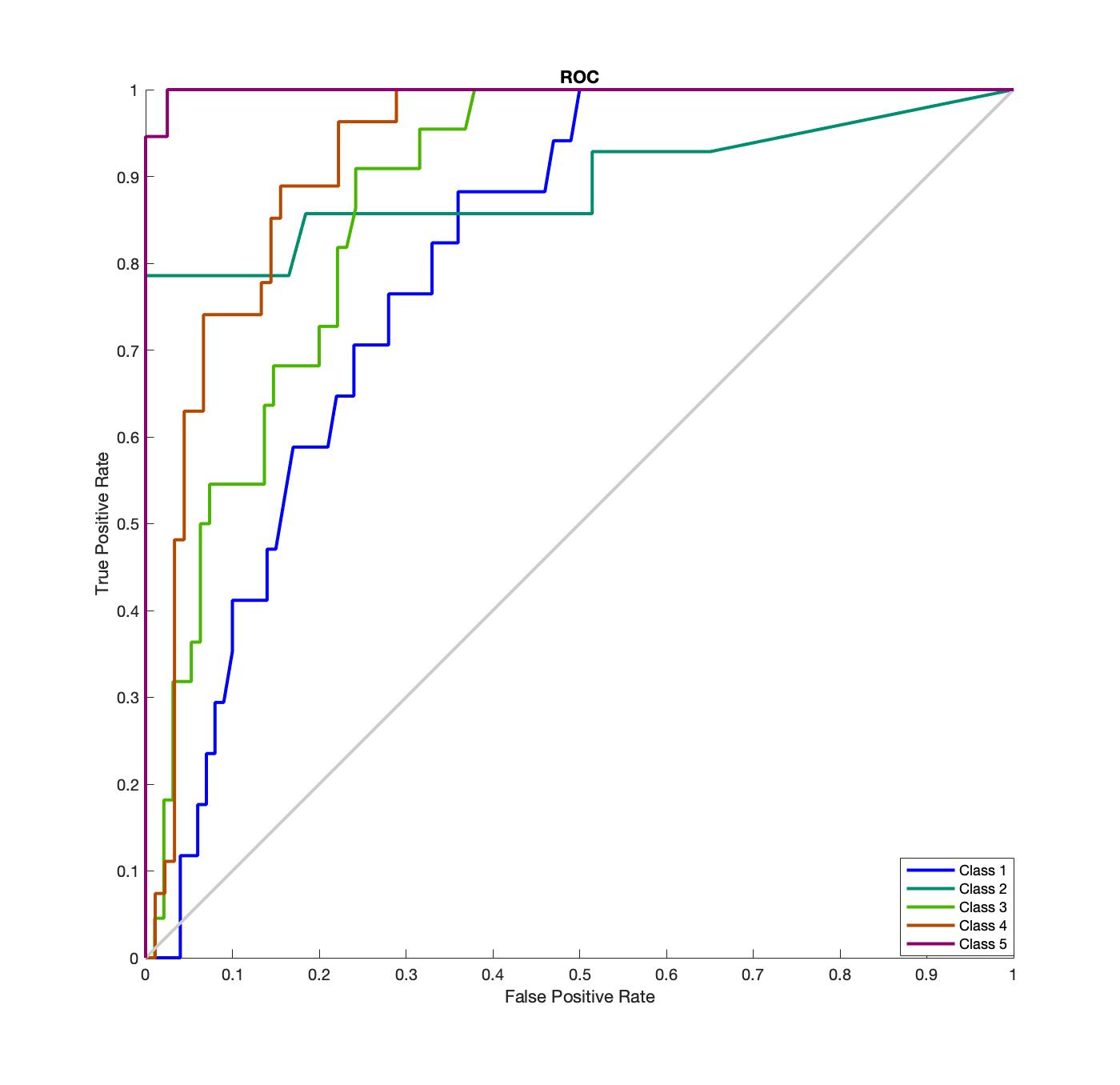}
		\caption{Random Forest}
		\label{fig:r2}
	\end{subfigure}
	\caption{ROC Curves}\label{fig:roc}
\end{figure}

\subsection{Local Explainer Validation}\label{s.localval}

Since the main difference between our local explainer training algorithm and those in the literature is our use of the FFFS information filter, our experiments on the local explainer are focused on validating the effectiveness of using this information filter. We compare the performance of our local explainer training algorithm to a similar algorithm without a filtering step. We then compare the performance of these methods in terms of explainer complexity and fidelity, across different sampling radii and across all patients.

For the sampling parameters of our algorithm, we sampled $N=10,000$ points centered around each patient within a radius $r$ of either 3, 7, 11, or 15. The distance metric for computing this radius was a combination of the $\ell_\infty$ norm for the continuous features and the $\ell_1$ norm for the binary features. The continuous value feature of each of the points was sampled uniformly using standard techniques (\cite{barthe2005probabilistic}).
For binary valued features, we randomly chose at most $r$ binary features and flipped their values. We first randomly generated an integer $k$ between $0$ and $r$, and randomly selected $k$ binary features which we then flipped from their current value (that is, values of 1 were set to 0 and vice versa). 
To compute probability density estimates, we found that the method performed well with histograms with only three bins for continuous features and two bins for binary features. Intuitively three bins allows us to categorize feature values as low, medium, or high relative to their range.


For both training methods, we chose to train decision trees as our the local explainer class because these have been shown to be ergonomically suitable for explaining black box models in healthcare contexts \citep{bastani2018interpreting}. Then we computed the corresponding \emph{fidelity score}, defined as the percentage of data  where the prediction of the decision tree matched the prediction of the random forest model.  We used the number of leaves on the decision tree as a measure of the explainer complexity. 

\begin{figure}[h]
	\centering
	\includegraphics[scale=0.5]{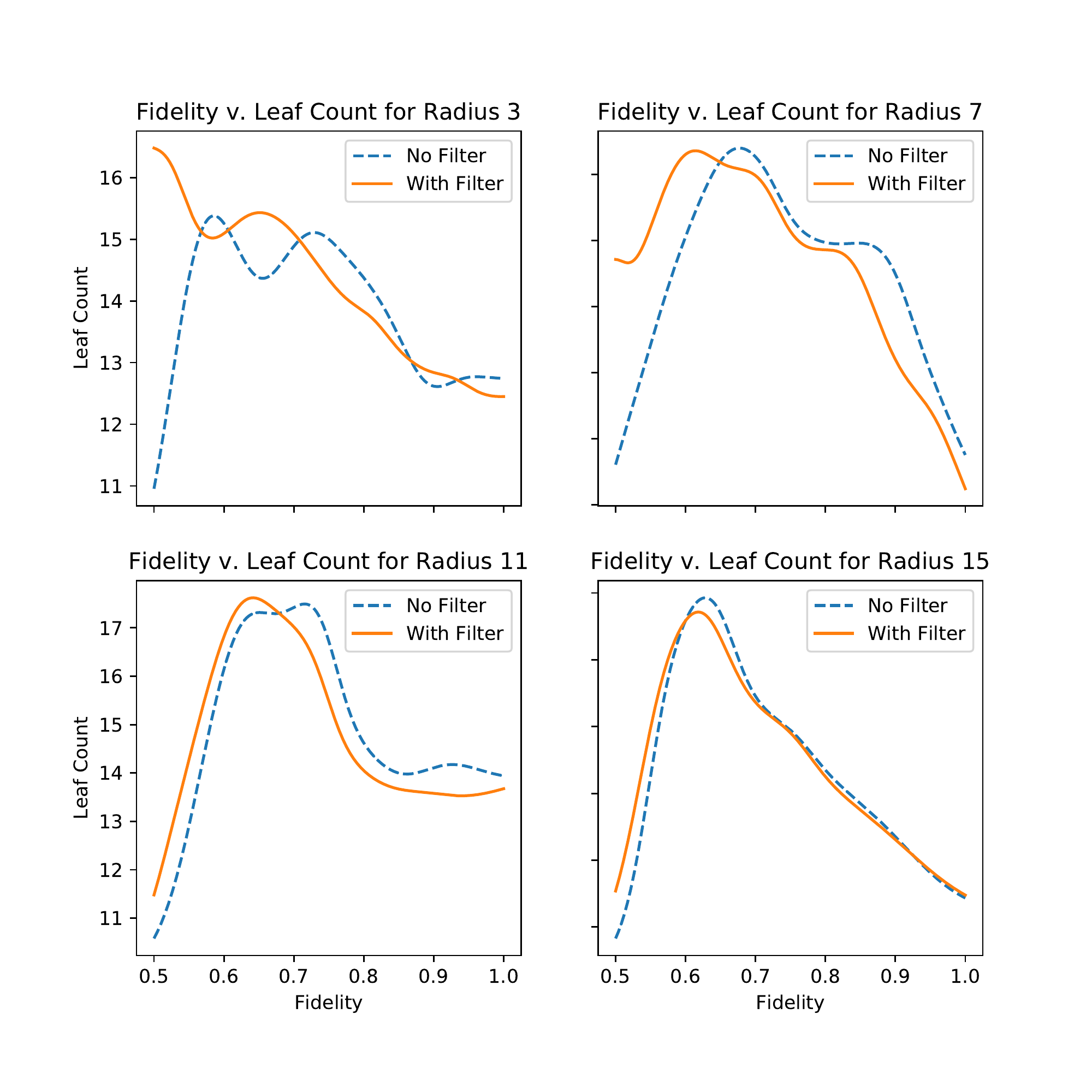}
	\caption{Comparison of local explainer algorithm with the information filter (solid line) and without the the information filter (dashed line) for various different radius settings for the algorithms. The x-axis corresponds to the given fidelity score of the model and the y-axis measures the complexity of the decision tree explainer by the number of leaves. For a small radius ($r=3$) and large radius ($r=15$), the addition of an information filter does not lead to a significant difference in model complexity across all levels of fidelity. However, using the information filter in explainer training for moderate sized radii ($r=7$ and $r=11$) results in less complex models at higher levels of fidelity ($>0.6$).}
	\label{fig:filter_comparison}
\end{figure}

In Figure \ref{fig:filter_comparison}, we compare the explainer complexity and fidelity level of the explainers generated by the two different training methodology across the four different tested sampling radii. Unsurprisingly, when the sampling radius is small (i.e., $r=3$), there is not much advantage to using the information filter in terms of reducing model complexity for a given fidelity level. Since all points are sampled so closely together, the relevant features are easily learned in explainer training. Conversely, when the sampling radius is large ($r=15$), the addition of the information filter only helps slightly. With such a large radius, sampling feature values that are far from the point that is meant to be explained may not give useful information for that prediction. However, when considering the medium radius ranges, for high levels of fidelity, the inclusion of the information filter provides simpler models across the board. In particular, consider the plots corresponding for local explainer radius of $r=7$ and $r=11$ in Figure \ref{fig:filter_comparison}. Note that in both of these figures, when considering high fidelity explainers generated by both methods (fidelity $\geq$ 0.6), the explainers generated by the information filter method are less complex then those generated without the filter. This would indicate that using our information filter, we can obtain high fidelity local explainers that are on average less complex then those generated without this filter. When considering low fidelity explainers, the no filter method creates less complex models then the filter method. This is because our filter method is better equipped to find relevant features even in more complex regions of the black box model, while the no filter method is unable to learn these regions effectively with a fixed sample size. This is significant since this would indicate that our proposed methodology is able to explain a larger portion of the feature space using less complex models while still finding meaningful features for explanations, relative to existing methodologies.

Overall, the plots in Figure \ref{fig:filter_comparison} show that incorporating an information filter into local explainer training can be more effective in extracting relevant features then using regularization, and can generate less complex models with high fidelity. In addition, these results indicate that using an information filter allows for local explainers with information filters to obtain higher fidelity over a larger radius with relatively less complex models. This is particularly significant since less complex models can be me more easily interpreted by domain experts, making it easier for them to translate the clinical significance of the black box model outputs. while larger explanation radii are useful for model validation and generalization of explanations. Moreover, even in complex decision regions generated by the black box model, using an information filter in conjunction with local explainers is better at extracting relevant features for predictions which again can be useful for model validation and providing clinical insights.

\section{FFFS Algorithmic Details}\label{app.fffs}

In this appendix, we present and discuss the FFFS algorithm used in our local explainer method. The main algorithm is presented in Algorithm \ref{alg:fffs}, and the required subroutines are presented in Algorithms \ref{alg:Recur}, \ref{alg:sf}, and \ref{alg:bin}.

Since the main structure of the algorithm requires a recursive tree traversal, Algorithm \ref{alg:fffs} includes a general prepossessing wrapper algorithm that starts the recursion. In this part of the algorithm, the sampled data points are used to compute the empirical densities of their feature values. These densities are approximated using histograms which can vary in the number of bins. For simplicity of presentation, we assume each histogram has the same bin size, but of course this detail can be modified in implementation. The key addition here is the computation of tensor $M$, which tracks the inclusion of each data point's features into their respective histogram bin.

Algorithm \ref{alg:Recur} contains the main recursion of the filter computation, and it outputs the selected features when it terminates. The recursion of Algorithm \ref{alg:Recur} requires a set of selected features $S$, a set of unselected features $U$, the binary tensor $M$, the black box model predictions $Y$, and $\mathcal{L}$, which is a set of partition sets of points in $T(\bar{x})$.
Since no features are selected prior to the first call to Algorithm \ref{alg:Recur}, we initialize the inputs $S =\emptyset$, $U=\Phi$, $Y = f(T(\bar{x}))$ and $\mathcal{L} = T(x_i)$ when it is first called in Algorithm \ref{alg:fffs}.
The recursion terminates and outputs the current set of selected features when either all features are selected or $\mathcal{L}$ becomes empty. If the termination condition is not met,  Algorithm \ref{alg:Recur} calls Algorithm \ref{alg:sf}, which updates $S,U, \text{ and }\mathcal{L}$ using a bin expansion. Then Algorithm \ref{alg:Recur} makes a recursive call with updated inputs and repeat the previous steps.

Algorithm \ref{alg:sf} is used to select one feature from the set of unselected features that maximizes the mutual information $I(\varphi;Y|U)$, and to update $\mathcal{L}$ given the current selected feature. We apply forward selection in Algorithm \ref{alg:sf}. In order to find $\varphi^*=\argmax_{\varphi \in U} I(\varphi;Y|U)$,
we compute $I(\varphi;y|S)$ for each unselected feature $\varphi$. The approximated mutual information $I(\varphi;y|S)$ is computed using the following equation \citep{brown2012conditional}:
\begin{equation*}
I(\varphi;y|S) \approx {\hat{I}(\varphi;y|S)}=\frac{1}{|T(\bar{x})|}\sum_{i=1}^{N}\log\frac{\hat{p}(\varphi;y|S)}{\hat{p}(\varphi|S)\hat{p}(\varphi|S)}.
\end{equation*}
If $I(\varphi^*;y|S)$ is not positive, then we do not select any new features. If no new feature is selected, we terminate the process by setting $U=\emptyset$, which satisfies the termination condition of Algorithm \ref{alg:Recur}, and the feature selection process will be complete. If $I(\varphi^*;y|S)>0$, then we can obtain additional information on the prediction by adding $\varphi^*$ to the set of selected features $S$ and removing it from the set of unselected features $U$. Algorithm \ref{alg:sf} then calls Algorithm \ref{alg:bin} to update $\mathcal{L}$ to $\mathcal{L}'$. Algorithm \ref{alg:bin} is used to partition each set in $\mathcal{L}$ given current selected feature $\varphi^*$. Using the binary tensor $M$, we can collect the set of bins for $\varphi^*$. As an illustrative example of this process, let $B_{\varphi^*}=\{b_1,b_2\}$ and $\mathcal{L}=T(\bar{x})=\{x_1,x_2,....,x_p\}$. Assume $x_i^{\varphi^*}\in b_1$ for $i < 5$ and $x_i^{\varphi^*}\in b_2$ otherwise. Then we can partition the set $\{x_1,x_2,....,x_p\}$ into 2 sets $\ell_1, \ell_2$ s.t. $\ell_1=\{x_1,...,x_4\}$ and $\ell_2=\{x_5,...,x_p\}$. Next we add sets $\ell_1,\ell_2$ to $\mathcal{L'}$. Since $\mathcal{L}$ contains exactly one set, we finish the partition process, and Algorithm \ref{alg:bin} outputs $\mathcal{L'}=\{\{x_1,...,x_4\}, \{x_5,...,x_p\}\}$. 


\begin{algorithm}
	\begin{algorithmic}[1]
		\caption{Fast Forward Feature Selection (FFFS)}
		\label{alg:fffs}
		\Require $T(\bar{x}),\Phi, f$
		\For {$\varphi \in \Phi_c$}
		\State Form histogram with bin set $B_\varphi$ and frequencies $\hat{p}_\varphi$
		\EndFor
		\State set $M\in |B_\varphi| \times |\Phi| \times N$ as a zero tensor
		\For{$x \in T(\bar{x})$}
		\For{$\varphi \in \Phi$}
		\For{$b \in B_\varphi$}
		\If{$x[\varphi] \in b$}\
		\State Set $M[b,\varphi,x] = 1$
		\EndIf
		\EndFor
		\EndFor
		\EndFor
		\State \Return RecursionFFS($\emptyset,\Phi,M,f(T(\bar{x})),T(\bar{x})$)
	\end{algorithmic}
\end{algorithm}

\begin{algorithm}
	\begin{algorithmic}[1]
		\caption{Recursion FFS}
		\label{alg:Recur}
		\Require $S,U, M,Y,\mathcal{L}$
		\If{$U=\emptyset \text{ or } \mathcal{L} = \emptyset$}
		\State \Return $S$
		\Else
		\State $[S',U',\mathcal{L}']= \text{ SelectFeature}(S,U,M,Y,\mathcal{L})$
		\State \Return RecursionFFS$(S',U',M,Y,\mathcal{L}')$
		\EndIf
	\end{algorithmic}
\end{algorithm}

\begin{algorithm}
	\begin{algorithmic}[1]
		\caption{Select Feature}
		\label{alg:sf}
		\Require $S,U, M,Y,\mathcal{L}$
		\State $f^* = \argmax_{f \in U} I(f;Y|U)$
		\If{$I(f^*;Y|U)>0$}
		\State $U = U \setminus f^*$
		\State $S = S \cup f^*$
		\State $\mathcal{L}' $= BinPartition $(M, \mathcal{L},f^*)$
		\State \Return $S,U,\mathcal{L}'$
		\Else
		\State $U=\emptyset$
		\State \Return $S,U,\mathcal{L}$
		\EndIf
	\end{algorithmic}
\end{algorithm}
\begin{algorithm}
	\begin{algorithmic}[1]
		\caption{Bin Partition}
		\label{alg:bin}
		\Require $M,\mathcal{L},f^*$
		\State Use $M$ to find $B_{f^*}$ s.t. $B_{f^*}=\{b_1,b_2,...,b_k\}$ is the set of bins for feature $f^*$
		\State $\mathcal{L'} = \emptyset$
		\For{$\ell \in \mathcal{L} $}
		\State Partition $\ell$ into smaller sets $\{\ell_1,\ell_2,...\ell_k\}$ w.r.t $B_{f^*}$: $\ell_i=\{t \in l : t^{f^*} \in b_i\} \forall i \in \{1,...,k\}$
		\State $\mathcal{L}'= \mathcal{L}' \cup \{l_1,...,l_k\}$
		\EndFor
		\State \Return $\mathcal{L}'$
		
	\end{algorithmic}
\end{algorithm}

\begin{proposition}
	The time complexity of the FFFS algorithm for a fixed maximum discretization bin size is $\mathcal{O}(N|\Phi|)$.
\end{proposition}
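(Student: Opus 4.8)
The plan is to bound the running time by charging it to two phases: the one-time preprocessing in Algorithm~\ref{alg:fffs} that builds the membership tensor $M$, and the recursive forward selection in Algorithms~\ref{alg:Recur}--\ref{alg:bin}. Throughout I would treat the maximum number of histogram bins $B = \max_\varphi |B_\varphi|$ and the number of labels $|\mathcal{L}|$ as fixed constants, so that any quantity polynomial in $B$ is $\mathcal{O}(1)$; this is exactly what the hypothesis ``fixed maximum discretization bin size'' is meant to license.

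First I would analyze the preprocessing. Forming the histograms for the continuous features and initializing $M \in \{0,1\}^{B \times |\Phi| \times N}$ touches each (bin, feature, point) triple at most once, so the triple loop in Algorithm~\ref{alg:fffs} costs $\mathcal{O}(N |\Phi| B)$, which is $\mathcal{O}(N|\Phi|)$ for fixed $B$. This already matches the target bound for the setup phase, so the crux is showing that the recursion does not exceed it.

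The key structural observation I would exploit is that at every depth of the recursion the partition sets in $\mathcal{L}$ are disjoint subsets of $T(\bar{x})$, hence $\sum_{\ell \in \mathcal{L}} |\ell| \le N$. Consequently, evaluating $\hat{I}(\varphi; Y \mid S)$ for a single candidate feature $\varphi$ reduces to aggregating, within each leaf, the $(\varphi, Y)$ contingency counts: using the precomputed tensor $M$, each point contributes $\mathcal{O}(B)$ work to exactly one leaf, so the empirical marginals $\hat{p}(\varphi \mid S)$, $\hat{p}(y \mid S)$ and the joint $\hat{p}(\varphi; y \mid S)$ over all leaves are assembled in $\mathcal{O}(N B)$ time, and BinPartition (Algorithm~\ref{alg:bin}) likewise refines all leaves in a single $\mathcal{O}(N)$ pass. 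Hence one invocation of SelectFeature (Algorithm~\ref{alg:sf}), which scans the unselected set $U$ once to compute $\argmax_{\varphi \in U} \hat{I}(\varphi; Y \mid S)$, costs $\mathcal{O}(|U| \cdot N B)$.

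The remaining, and main, obstacle is to sum the per-round costs without accruing a second factor of $|\Phi|$. Here I would argue that the greedy rule terminates after only a constant number of rounds: the recursion in Algorithm~\ref{alg:Recur} halts as soon as no unselected feature has strictly positive conditional mutual information, and for a fixed bin size the number of features added before this criterion fires can be treated as a constant feature budget (empirically at most five). Under such a budget the total recursion cost is $\mathcal{O}(1) \cdot \mathcal{O}(|\Phi| N B) = \mathcal{O}(N|\Phi|)$, and combining with the preprocessing yields the claimed $\mathcal{O}(N|\Phi|)$. I expect the delicate point to be justifying that the number of selected features is genuinely $\mathcal{O}(1)$ rather than $\mathcal{O}(|\Phi|)$: without a constant feature budget the same accounting gives only $\mathcal{O}(|\Phi|^2 N)$, so the proof hinges either on invoking this budget or on a sharper amortization of the $\argmax$ scan over $U$ across the recursion levels.
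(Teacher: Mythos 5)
Your decomposition of the cost is sound up to the last step, but the device you lean on to finish --- that the recursion runs for only $\mathcal{O}(1)$ rounds --- is a genuine gap, and it cannot be repaired from the stated hypotheses. The proposition assumes only a fixed maximum bin size; it places no budget on the number of selected features. The stopping rule in Algorithms \ref{alg:Recur} and \ref{alg:sf} fires only when no unselected feature has strictly positive estimated conditional mutual information, and in the worst case every feature carries positive information, so the recursion depth can reach $|\Phi|$ --- indeed the paper's own proof explicitly takes $|\Phi|$ as the worst-case tree depth. ``Empirically at most five'' is an observation about the datasets in Appendix \ref{sec:exp_res}, not a property of the algorithm, so it cannot serve as a hypothesis. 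Under your per-round accounting of $\mathcal{O}(|U| N B)$, dropping the constant-budget assumption leaves you with only $\mathcal{O}(N|\Phi|^2)$, exactly as you concede.

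The paper closes the argument with the opposite accounting: it charges $\mathcal{O}(N)$ work per tree level for the combined SelectFeature and BinPartition steps (for fixed bin size) and multiplies by the worst-case depth $|\Phi|$, giving $\mathcal{O}(N|\Phi|)$. In other words, where you charge each level for a full scan over the unselected features $U$, the paper does not charge a per-candidate-feature factor at all. Your finer accounting is arguably the more faithful one for the algorithm as written, since computing $\argmax_{\varphi \in U} \hat{I}(\varphi; Y \mid S)$ requires evaluating the estimate for each candidate $\varphi$, each of which is a pass over the leaves. So to reach the claimed bound along the paper's lines you would need to show that all the mutual-information evaluations at a given level can be assembled in a single $\mathcal{O}(N)$ pass (or otherwise amortize the scan over $U$ across levels), rather than postulate a constant number of rounds. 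As it stands, neither your constant-budget assumption nor such an amortization is established, so the proposal does not prove the proposition.
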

\begin{proof}
	Note that the size of the generated points is given by the input parameter $N$, and the set of all features is denoted by $\Phi$. First, since the bin sized is fixed as a constant, and the preprocessing step requires a nested \textbf{for} loop, the total time complexity of the preprocessing is $\mathcal{O}(N|\Phi|)$. The FFFS algorithm operates as a tree traversal, where the depth of the tree at the final stage corresponds to the number of selected features. In each level of the tree, the mutual information of all points is evaluated using Algorithm \ref{alg:sf} and the sets of generated points are partitioned into smaller sets using Algorithm \ref{alg:bin}, which combined require $\mathcal{O}(N)$ operations. Next, since in the worst case, all features contain positive mutual information on the prediction value of the black box model, the maximum possible tree depth is given by $|\Phi|$. Combining these two facts gives the desired result.
\end{proof}

\section{Additional figures}\label{app.fig}

\begin{figure}[htb]
	\centering
	\includegraphics[width=0.5\textwidth]{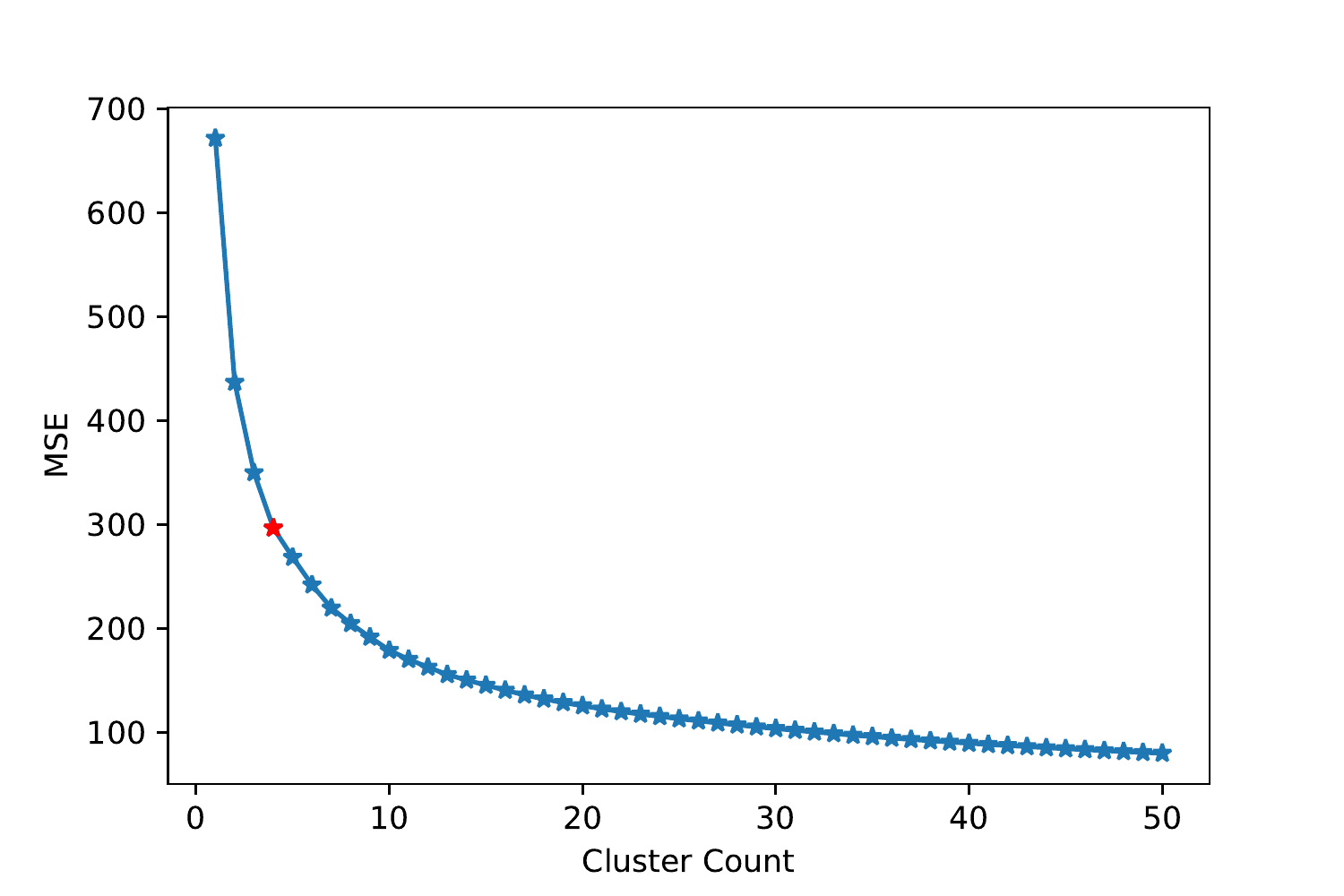}
	\caption{Elbow plot for determining number of clusters to use for $k$-means clustering. Red marked value is located at 4 clusters and roughly corresponds to the bend in the elbow. The $x$-axis describes the total number of clusters used in $k$-means clustering, and the $y$-axis represents the MSE loss associated with the resulting clusters.}
	\label{fig:elbow_plotl}
\end{figure}

\end{document}